\documentclass{article}

 \usepackage[preprint]{neurips_2026}

\usepackage[utf8]{inputenc} % allow utf-8 input
\usepackage[T1]{fontenc}    % use 8-bit T1 fonts
\usepackage{hyperref}       % hyperlinks
\usepackage{url}            % simple URL typesetting
\usepackage{booktabs}       % professional-quality tables
\usepackage{amsfonts}       % blackboard math symbols
\usepackage{nicefrac}       % compact symbols for 1/2, etc.
\usepackage{microtype}      % microtypography
\usepackage{xcolor}         % colors

\usepackage{algorithm}
\usepackage{algorithmic}

\usepackage{amsmath,amsthm}
\newtheorem{theorem}{Theorem}
\newtheorem{lemma}{Lemma}
\newtheorem{corollary}{Corollary}
\newtheorem{proposition}{Proposition}
\newtheorem{problem}{Problem Statement}

\theoremstyle{plain}
\newtheoremstyle{TheoremNum}
    {\topsep}{\topsep}              %%% space between body and thm
    {\itshape}                      %%% Thm body font
    {}                              %%% Indent amount (empty = no indent)
    {\bfseries}                     %%% Thm head font
    {.}                             %%% Punctuation after thm head
    { }                             %%% Space after thm head
    {\thmname{#1}\thmnote{ \bfseries #3}}%%% Thm head spec
\theoremstyle{TheoremNum}
\newtheorem{proposition-repeat}{Proposition}
\newtheorem{lemma-repeat}{Lemma}
\newtheorem{theorem-repeat}{Theorem}

\newcommand{\tr}{\mathrm{tr}}
\newcommand{\mils}{h_\text{max}}

\usepackage[breakable]{tcolorbox}
    \usepackage{parskip} % Stop auto-indenting (to mimic markdown behaviour)

    \usepackage{graphicx}
    \usepackage{float}
    \usepackage{xcolor} % Allow colors to be defined
    \usepackage{enumerate} % Needed for markdown enumerations to work
    \usepackage{geometry} % Used to adjust the document margins
    \usepackage{amsmath} % Equations
    \usepackage{amssymb} % Equations
    \usepackage{textcomp} % defines textquotesingle
    \usepackage{upquote} % Upright quotes for verbatim code
    \usepackage{eurosym} % defines \euro

    \usepackage{iftex}
    \ifPDFTeX
        \usepackage[T1]{fontenc}
        \IfFileExists{alphabeta.sty}{
              \usepackage{alphabeta}
          }{
              \usepackage[mathletters]{ucs}
              \usepackage[utf8x]{inputenc}
          }
    \else
        \usepackage{fontspec}
        \usepackage{unicode-math}
    \fi

    \usepackage{fancyvrb} % verbatim replacement that allows latex
    \usepackage{grffile} % extends the file name processing of package graphics
    \makeatletter % fix for old versions of grffile with XeLaTeX
    \@ifpackagelater{grffile}{2019/11/01}
    {
    }
    {
      \def\Gread@@xetex#1{%
        \IfFileExists{"\Gin@base".bb}%
        {\Gread@eps{\Gin@base.bb}}%
        {\Gread@@xetex@aux#1}%
      }
    }
    \makeatother
    \usepackage[Export]{adjustbox} % Used to constrain images to a maximum size
    \adjustboxset{max size={0.9\linewidth}{0.9\paperheight}}

    \usepackage{hyperref}
    \usepackage{longtable} % longtable support required by pandoc >1.10
    \usepackage{booktabs}  % table support for pandoc > 1.12.2
    \usepackage{array}     % table support for pandoc >= 2.11.3
    \usepackage{calc}      % table minipage width calculation for pandoc >= 2.11.1
    \usepackage[inline]{enumitem} % IRkernel/repr support (it uses the enumerate* environment)
    \usepackage[normalem]{ulem} % ulem is needed to support strikethroughs (\sout)
    \usepackage{mathrsfs}

    \definecolor{urlcolor}{rgb}{0,.145,.698}
    \definecolor{linkcolor}{rgb}{.71,0.21,0.01}
    \definecolor{citecolor}{rgb}{.12,.54,.11}

    \definecolor{ansi-black}{HTML}{3E424D}
    \definecolor{ansi-black-intense}{HTML}{282C36}
    \definecolor{ansi-red}{HTML}{E75C58}
    \definecolor{ansi-red-intense}{HTML}{B22B31}
    \definecolor{ansi-green}{HTML}{00A250}
    \definecolor{ansi-green-intense}{HTML}{007427}
    \definecolor{ansi-yellow}{HTML}{DDB62B}
    \definecolor{ansi-yellow-intense}{HTML}{B27D12}
    \definecolor{ansi-blue}{HTML}{208FFB}
    \definecolor{ansi-blue-intense}{HTML}{0065CA}
    \definecolor{ansi-magenta}{HTML}{D160C4}
    \definecolor{ansi-magenta-intense}{HTML}{A03196}
    \definecolor{ansi-cyan}{HTML}{60C6C8}
    \definecolor{ansi-cyan-intense}{HTML}{258F8F}
    \definecolor{ansi-white}{HTML}{C5C1B4}
    \definecolor{ansi-white-intense}{HTML}{A1A6B2}
    \definecolor{ansi-default-inverse-fg}{HTML}{FFFFFF}
    \definecolor{ansi-default-inverse-bg}{HTML}{000000}

    \definecolor{outerrorbackground}{HTML}{FFDFDF}

    \DefineVerbatimEnvironment{Highlighting}{Verbatim}{commandchars=\\\{\}}
    \let\Oldtex\TeX
    \let\Oldlatex\LaTeX
    \renewcommand{\TeX}{\textrm{\Oldtex}}
    \renewcommand{\LaTeX}{\textrm{\Oldlatex}}
    \title{Numerical Simulations}

\makeatletter
\def\PY@reset{\let\PY@it=\relax \let\PY@bf=\relax%
    \let\PY@ul=\relax \let\PY@tc=\relax%
    \let\PY@bc=\relax \let\PY@ff=\relax}
\def\PY@tok#1{\csname PY@tok@#1\endcsname}
\def\PY@toks#1+{\ifx\relax#1\empty\else%
    \PY@tok{#1}\expandafter\PY@toks\fi}
\def\PY@do#1{\PY@bc{\PY@tc{\PY@ul{%
    \PY@it{\PY@bf{\PY@ff{#1}}}}}}}
\def\PY#1#2{\PY@reset\PY@toks#1+\relax+\PY@do{#2}}

\@namedef{PY@tok@w}{\def\PY@tc##1{\textcolor[rgb]{0.73,0.73,0.73}{##1}}}
\@namedef{PY@tok@c}{\let\PY@it=\textit\def\PY@tc##1{\textcolor[rgb]{0.24,0.48,0.48}{##1}}}
\@namedef{PY@tok@cp}{\def\PY@tc##1{\textcolor[rgb]{0.61,0.40,0.00}{##1}}}
\@namedef{PY@tok@k}{\let\PY@bf=\textbf\def\PY@tc##1{\textcolor[rgb]{0.00,0.50,0.00}{##1}}}
\@namedef{PY@tok@kp}{\def\PY@tc##1{\textcolor[rgb]{0.00,0.50,0.00}{##1}}}
\@namedef{PY@tok@kt}{\def\PY@tc##1{\textcolor[rgb]{0.69,0.00,0.25}{##1}}}
\@namedef{PY@tok@o}{\def\PY@tc##1{\textcolor[rgb]{0.40,0.40,0.40}{##1}}}
\@namedef{PY@tok@ow}{\let\PY@bf=\textbf\def\PY@tc##1{\textcolor[rgb]{0.67,0.13,1.00}{##1}}}
\@namedef{PY@tok@nb}{\def\PY@tc##1{\textcolor[rgb]{0.00,0.50,0.00}{##1}}}
\@namedef{PY@tok@nf}{\def\PY@tc##1{\textcolor[rgb]{0.00,0.00,1.00}{##1}}}
\@namedef{PY@tok@nc}{\let\PY@bf=\textbf\def\PY@tc##1{\textcolor[rgb]{0.00,0.00,1.00}{##1}}}
\@namedef{PY@tok@nn}{\let\PY@bf=\textbf\def\PY@tc##1{\textcolor[rgb]{0.00,0.00,1.00}{##1}}}
\@namedef{PY@tok@ne}{\let\PY@bf=\textbf\def\PY@tc##1{\textcolor[rgb]{0.80,0.25,0.22}{##1}}}
\@namedef{PY@tok@nv}{\def\PY@tc##1{\textcolor[rgb]{0.10,0.09,0.49}{##1}}}
\@namedef{PY@tok@no}{\def\PY@tc##1{\textcolor[rgb]{0.53,0.00,0.00}{##1}}}
\@namedef{PY@tok@nl}{\def\PY@tc##1{\textcolor[rgb]{0.46,0.46,0.00}{##1}}}
\@namedef{PY@tok@ni}{\let\PY@bf=\textbf\def\PY@tc##1{\textcolor[rgb]{0.44,0.44,0.44}{##1}}}
\@namedef{PY@tok@na}{\def\PY@tc##1{\textcolor[rgb]{0.41,0.47,0.13}{##1}}}
\@namedef{PY@tok@nt}{\let\PY@bf=\textbf\def\PY@tc##1{\textcolor[rgb]{0.00,0.50,0.00}{##1}}}
\@namedef{PY@tok@nd}{\def\PY@tc##1{\textcolor[rgb]{0.67,0.13,1.00}{##1}}}
\@namedef{PY@tok@s}{\def\PY@tc##1{\textcolor[rgb]{0.73,0.13,0.13}{##1}}}
\@namedef{PY@tok@sd}{\let\PY@it=\textit\def\PY@tc##1{\textcolor[rgb]{0.73,0.13,0.13}{##1}}}
\@namedef{PY@tok@si}{\let\PY@bf=\textbf\def\PY@tc##1{\textcolor[rgb]{0.64,0.35,0.47}{##1}}}
\@namedef{PY@tok@se}{\let\PY@bf=\textbf\def\PY@tc##1{\textcolor[rgb]{0.67,0.36,0.12}{##1}}}
\@namedef{PY@tok@sr}{\def\PY@tc##1{\textcolor[rgb]{0.64,0.35,0.47}{##1}}}
\@namedef{PY@tok@ss}{\def\PY@tc##1{\textcolor[rgb]{0.10,0.09,0.49}{##1}}}
\@namedef{PY@tok@sx}{\def\PY@tc##1{\textcolor[rgb]{0.00,0.50,0.00}{##1}}}
\@namedef{PY@tok@m}{\def\PY@tc##1{\textcolor[rgb]{0.40,0.40,0.40}{##1}}}
\@namedef{PY@tok@gh}{\let\PY@bf=\textbf\def\PY@tc##1{\textcolor[rgb]{0.00,0.00,0.50}{##1}}}
\@namedef{PY@tok@gu}{\let\PY@bf=\textbf\def\PY@tc##1{\textcolor[rgb]{0.50,0.00,0.50}{##1}}}
\@namedef{PY@tok@gd}{\def\PY@tc##1{\textcolor[rgb]{0.63,0.00,0.00}{##1}}}
\@namedef{PY@tok@gi}{\def\PY@tc##1{\textcolor[rgb]{0.00,0.52,0.00}{##1}}}
\@namedef{PY@tok@gr}{\def\PY@tc##1{\textcolor[rgb]{0.89,0.00,0.00}{##1}}}
\@namedef{PY@tok@ge}{\let\PY@it=\textit}
\@namedef{PY@tok@gs}{\let\PY@bf=\textbf}
\@namedef{PY@tok@gp}{\let\PY@bf=\textbf\def\PY@tc##1{\textcolor[rgb]{0.00,0.00,0.50}{##1}}}
\@namedef{PY@tok@go}{\def\PY@tc##1{\textcolor[rgb]{0.44,0.44,0.44}{##1}}}
\@namedef{PY@tok@gt}{\def\PY@tc##1{\textcolor[rgb]{0.00,0.27,0.87}{##1}}}
\@namedef{PY@tok@err}{\def\PY@bc##1{{\setlength{\fboxsep}{\string -\fboxrule}\fcolorbox[rgb]{1.00,0.00,0.00}{1,1,1}{\strut ##1}}}}
\@namedef{PY@tok@kc}{\let\PY@bf=\textbf\def\PY@tc##1{\textcolor[rgb]{0.00,0.50,0.00}{##1}}}
\@namedef{PY@tok@kd}{\let\PY@bf=\textbf\def\PY@tc##1{\textcolor[rgb]{0.00,0.50,0.00}{##1}}}
\@namedef{PY@tok@kn}{\let\PY@bf=\textbf\def\PY@tc##1{\textcolor[rgb]{0.00,0.50,0.00}{##1}}}
\@namedef{PY@tok@kr}{\let\PY@bf=\textbf\def\PY@tc##1{\textcolor[rgb]{0.00,0.50,0.00}{##1}}}
\@namedef{PY@tok@bp}{\def\PY@tc##1{\textcolor[rgb]{0.00,0.50,0.00}{##1}}}
\@namedef{PY@tok@fm}{\def\PY@tc##1{\textcolor[rgb]{0.00,0.00,1.00}{##1}}}
\@namedef{PY@tok@vc}{\def\PY@tc##1{\textcolor[rgb]{0.10,0.09,0.49}{##1}}}
\@namedef{PY@tok@vg}{\def\PY@tc##1{\textcolor[rgb]{0.10,0.09,0.49}{##1}}}
\@namedef{PY@tok@vi}{\def\PY@tc##1{\textcolor[rgb]{0.10,0.09,0.49}{##1}}}
\@namedef{PY@tok@vm}{\def\PY@tc##1{\textcolor[rgb]{0.10,0.09,0.49}{##1}}}
\@namedef{PY@tok@sa}{\def\PY@tc##1{\textcolor[rgb]{0.73,0.13,0.13}{##1}}}
\@namedef{PY@tok@sb}{\def\PY@tc##1{\textcolor[rgb]{0.73,0.13,0.13}{##1}}}
\@namedef{PY@tok@sc}{\def\PY@tc##1{\textcolor[rgb]{0.73,0.13,0.13}{##1}}}
\@namedef{PY@tok@dl}{\def\PY@tc##1{\textcolor[rgb]{0.73,0.13,0.13}{##1}}}
\@namedef{PY@tok@s2}{\def\PY@tc##1{\textcolor[rgb]{0.73,0.13,0.13}{##1}}}
\@namedef{PY@tok@sh}{\def\PY@tc##1{\textcolor[rgb]{0.73,0.13,0.13}{##1}}}
\@namedef{PY@tok@s1}{\def\PY@tc##1{\textcolor[rgb]{0.73,0.13,0.13}{##1}}}
\@namedef{PY@tok@mb}{\def\PY@tc##1{\textcolor[rgb]{0.40,0.40,0.40}{##1}}}
\@namedef{PY@tok@mf}{\def\PY@tc##1{\textcolor[rgb]{0.40,0.40,0.40}{##1}}}
\@namedef{PY@tok@mh}{\def\PY@tc##1{\textcolor[rgb]{0.40,0.40,0.40}{##1}}}
\@namedef{PY@tok@mi}{\def\PY@tc##1{\textcolor[rgb]{0.40,0.40,0.40}{##1}}}
\@namedef{PY@tok@il}{\def\PY@tc##1{\textcolor[rgb]{0.40,0.40,0.40}{##1}}}
\@namedef{PY@tok@mo}{\def\PY@tc##1{\textcolor[rgb]{0.40,0.40,0.40}{##1}}}
\@namedef{PY@tok@ch}{\let\PY@it=\textit\def\PY@tc##1{\textcolor[rgb]{0.24,0.48,0.48}{##1}}}
\@namedef{PY@tok@cm}{\let\PY@it=\textit\def\PY@tc##1{\textcolor[rgb]{0.24,0.48,0.48}{##1}}}
\@namedef{PY@tok@cpf}{\let\PY@it=\textit\def\PY@tc##1{\textcolor[rgb]{0.24,0.48,0.48}{##1}}}
\@namedef{PY@tok@c1}{\let\PY@it=\textit\def\PY@tc##1{\textcolor[rgb]{0.24,0.48,0.48}{##1}}}
\@namedef{PY@tok@cs}{\let\PY@it=\textit\def\PY@tc##1{\textcolor[rgb]{0.24,0.48,0.48}{##1}}}

\def\PYZbs{\char`\\}
\def\PYZus{\char`\_}
\def\PYZob{\char`\{}
\def\PYZcb{\char`\}}

\def\PYZhy{\char`\-}
\def\PYZsq{\char`\'}
\def\PYZdq{\char`\"}

\makeatother

    \makeatletter
        \newbox\Wrappedcontinuationbox
        \newbox\Wrappedvisiblespacebox
        \newcommand*\Wrappedvisiblespace {\textcolor{red}{\textvisiblespace}}
        \newcommand*\Wrappedcontinuationsymbol {\textcolor{red}{\llap{\tiny$\m@th\hookrightarrow$}}}
        \newcommand*\Wrappedcontinuationindent {3ex }
        \newcommand*\Wrappedafterbreak {\kern\Wrappedcontinuationindent\copy\Wrappedcontinuationbox}
        \newcommand*\Wrappedbreaksatspecials {%
            \def\PYGZus{\discretionary{\char`\_}{\Wrappedafterbreak}{\char`\_}}%
            \def\PYGZob{\discretionary{}{\Wrappedafterbreak\char`\{}{\char`\{}}%
            \def\PYGZcb{\discretionary{\char`\}}{\Wrappedafterbreak}{\char`\}}}%
            \def\PYGZca{\discretionary{\char`\^}{\Wrappedafterbreak}{\char`\^}}%
            \def\PYGZam{\discretionary{\char`\&}{\Wrappedafterbreak}{\char`\&}}%
            \def\PYGZlt{\discretionary{}{\Wrappedafterbreak\char`\<}{\char`\<}}%
            \def\PYGZgt{\discretionary{\char`\>}{\Wrappedafterbreak}{\char`\>}}%
            \def\PYGZsh{\discretionary{}{\Wrappedafterbreak\char`\#}{\char`\#}}%
            \def\PYGZpc{\discretionary{}{\Wrappedafterbreak\char`\%}{\char`\%}}%
            \def\PYGZdl{\discretionary{}{\Wrappedafterbreak\char`\$}{\char`\$}}%
            \def\PYGZhy{\discretionary{\char`\-}{\Wrappedafterbreak}{\char`\-}}%
            \def\PYGZsq{\discretionary{}{\Wrappedafterbreak\textquotesingle}{\textquotesingle}}%
            \def\PYGZdq{\discretionary{}{\Wrappedafterbreak\char`\"}{\char`\"}}%
            \def\PYGZti{\discretionary{\char`\~}{\Wrappedafterbreak}{\char`\~}}%
        }
        \newcommand*\Wrappedbreaksatpunct {%
            \lccode`\~`\.\lowercase{\def~}{\discretionary{\hbox{\char`\.}}{\Wrappedafterbreak}{\hbox{\char`\.}}}%
            \lccode`\~`\,\lowercase{\def~}{\discretionary{\hbox{\char`\,}}{\Wrappedafterbreak}{\hbox{\char`\,}}}%
            \lccode`\~`\;\lowercase{\def~}{\discretionary{\hbox{\char`\;}}{\Wrappedafterbreak}{\hbox{\char`\;}}}%
            \lccode`\~`\:\lowercase{\def~}{\discretionary{\hbox{\char`\:}}{\Wrappedafterbreak}{\hbox{\char`\:}}}%
            \lccode`\~`\?\lowercase{\def~}{\discretionary{\hbox{\char`\?}}{\Wrappedafterbreak}{\hbox{\char`\?}}}%
            \lccode`\~`\!\lowercase{\def~}{\discretionary{\hbox{\char`\!}}{\Wrappedafterbreak}{\hbox{\char`\!}}}%
            \lccode`\~`\/\lowercase{\def~}{\discretionary{\hbox{\char`\/}}{\Wrappedafterbreak}{\hbox{\char`\/}}}%
            \catcode`\.\active
            \catcode`\,\active
            \catcode`\;\active
            \catcode`\:\active
            \catcode`\?\active
            \catcode`\!\active
            \catcode`\/\active
            \lccode`\~`\~
        }
    \makeatother

    \let\OriginalVerbatim=\Verbatim
    \makeatletter
    \renewcommand{\Verbatim}[1][1]{%
        \sbox\Wrappedcontinuationbox {\Wrappedcontinuationsymbol}%
        \sbox\Wrappedvisiblespacebox {\FV@SetupFont\Wrappedvisiblespace}%
        \def\FancyVerbFormatLine ##1{\hsize\linewidth
            \vtop{\raggedright\hyphenpenalty\z@\exhyphenpenalty\z@
                \doublehyphendemerits\z@\finalhyphendemerits\z@
                \strut ##1\strut}%
        }%
        \def\FV@Space {%
            \nobreak\hskip\z@ plus\fontdimen3\font minus\fontdimen4\font
            \discretionary{\copy\Wrappedvisiblespacebox}{\Wrappedafterbreak}
            {\kern\fontdimen2\font}%
        }%

        \Wrappedbreaksatspecials
        \OriginalVerbatim[#1,codes*=\Wrappedbreaksatpunct]%
    }
    \makeatother

    \definecolor{incolor}{HTML}{303F9F}
    \definecolor{outcolor}{HTML}{D84315}
    \definecolor{cellborder}{HTML}{CFCFCF}
    \definecolor{cellbackground}{HTML}{F7F7F7}

    \makeatletter
    \newcommand{\boxspacing}{\kern\kvtcb@left@rule\kern\kvtcb@boxsep}
    \makeatother
    \newcommand{\prompt}[4]{
        {\ttfamily\llap{{\color{#2}[#3]:\hspace{3pt}#4}}\vspace{-\baselineskip}}
    }

\title{The Approximation Ratio for the Risk of Myopic Bayesian Active Learning for Linear Regression}

\author{%
  Stephen Mussmann\\
  School of Computer Science\\
  Georgia Institute of Technology\\
  \texttt{mussmann@gatech.edu} \\
}

\begin{document}

\maketitle

\begin{abstract}
  Active learning studies the fundamental question: what data should we choose to observe? The greedy algorithm in optimal experiment design is a common heuristic and also equivalent to \emph{myopic} Bayesian active learning for linear regression, the common framework where long-term planning is replaced with the one-step optimal choice. In this work, we prove a first-of-its-kind approximation ratio for the greedy algorithm's risk that is tight up to an absolute constant. The approximation ratio is linear in the \emph{maximum initial leverage score} (MILS), a newly identified quantity fundamental to the greedy algorithm's performance. Finally, we illustrate the results with simple numerical simulations.
\end{abstract}

\section{Introduction}
 
Optimal experimental design and active learning are two closely-related paradigms for selecting informative data under a budget constraint. Both ask the same fundamental question: given the freedom to choose from a pool of candidate inputs, which should we observe in order to minimize model parameter estimation error or future prediction error? In experimental design the choices are typically made offline, while in active learning they are made adaptively as new labels are observed. However, for Bayesian linear regression, the model  under consideration in this work, the observation values don't affect the estimation or prediction risk, and thus the offline and adaptive settings are equivalent. The task is therefore a particular set optimization function ($A/V$-optimal design) with a large search space of size $\binom{n}{k}$ that precludes brute force in any realistic setting.

While exact optimization is NP-hard \citep{li2025strong}, approximation algorithms have been developed. In this work, we focus on the greedy algorithm. The greedy algorithm starts from the empty set and repeatedly adds the point that yields the largest immediate reduction in risk, until $k$ points have been selected. Not only is the greedy algorithm a practical choice in the offline setting, but more importantly, addresses a fundamental question in the adaptive setting. In the adaptive setting, to avoid the intractability of planning, the most common Bayesian active learning algorithms~\citep{mackay1992information,gal2017deep,smith2023prediction} rely on a myopic approach: choose the observation that optimally reduces the risk, as if it were the last step. The connection between ``one-step optimal'' and ``multi-step optimal'' is a gap with little work in the literature. We focus on that connection via analyzing the equivalent greedy algorithm.

Surprisingly little is known about how the greedy algorithm compares to the optimal strategy for this problem. Existing guarantees \citep{pmlr-v70-bian17a,chamon2017approximate} bound the the \emph{reduction} in the estimation or prediction risk ($A/V$-optimal design). By showing the risk reduction is monotone and approximately submodular, these work show greedy attains a constant fraction of the optimal \emph{reduction}. However, a constant factor approximation factor for the \emph{reduction} is often vacuous. Whether greedy achieves a constant factor approximation for the risk itself has, to our knowledge, remained open.
 
We close this gap with the following contributions:
\begin{itemize}
    \item \textbf{Constant-factor risk guarantee.} We prove that the reciprocal risk for Bayesian Linear Regression is approximately submodular in the sense of~\citet{das2018approximate}. Combining this with existing analyses of greedy under approximate submodularity yields the first constant-factor approximation guarantee on the risk achieved by greedy. The constant is the problem-dependent quantity \emph{maximum initial leverage score} (MILS).
    \item \textbf{Parametrized hard instance showing tightness.} We construct a family of problems on which greedy's risk is provably a factor of the MILS larger than the optimal risk which matches our upper bound. This shows that the problem-dependent factor in our guarantee is necessary, not an artifact of the analysis.
    \item \textbf{Numerical simulation.} We use numerical simulation to illustrate previously known bounds and our bound, as well as confirm greedy's poor performance in our construction.
\end{itemize}

We provide the precise problem statement in Section~\ref{sec:problem}, cover necessary background and related work in Section~\ref{sec:background}, provide our upper bound and lower bounds in Sections \ref{sec:upper} and \ref{sec:lower}, then show an illustrative example in Section~\ref{sec:example}.

\section{Problem Statement}
\label{sec:problem}
Precisely, our problem statement is the following:
\begin{problem}
\label{problem}
    Given a set of $n$ vectors  $V = \{v_i\}_{i=1}^n \subset \mathbb{R}^d$, a positive definite matrix $\Lambda \in \mathbb{R}^{d \times d}$, and a budget $k$, choose a set $S \subset [n]$ of size $|S|=k$ to minimize 

    \begin{align}
        f(S) := \tr\left( \left(\Lambda + \sum_{i \in S} v_i v_i^\top \right)^{-1} \right)
    \end{align}
\end{problem}

The problem parameters are integers $n$, $k$, and $d$. In our analysis, we show the importance of an additional problem-dependent parameter which we refer to as the \emph{maximum initial leverage score} (MILS),
\begin{align}
    \mils = \max_{i \in [n]} v_i^\top \Lambda^{-1} v_i.
\end{align}

For a given problem defined by $V$, $\Lambda$, and $k$, we denote an optimal solution as
\begin{align}
    S^\star = \arg\max_{S \subset [n]: |S|=k} f(S)
\end{align}

\subsection{Greedy Algorithm}

In this paper, we analyze the greedy algorithm, which, given a problem, returns a set $S_\text{greedy}$. The greedy algorithm starts with an initial $S_0=\emptyset$, then for $k$ iterations, chooses the element that when added, would minimize $f$. See Algorithm~\ref{alg}. 

\begin{algorithm}[H]
\caption{Greedy Algorithm}
\label{alg}
\renewcommand{\algorithmicrequire}{\textbf{Input:}}
\renewcommand{\algorithmicensure}{\textbf{Output:}}
\begin{algorithmic}[1]
\REQUIRE Vectors $V \in \mathbb{R}^{n \times d}$, Matrix $\Lambda \in \mathbb{R}^{d \times d}$, budget $k \in \mathbb{N}$
\ENSURE Selected set $S \subset [n]$ of size $k$
\STATE $S_0 = \emptyset$
\FOR{iteration $t \leftarrow 1$ \TO $k$}
\STATE Compute $i_t \in \arg\min_{i \not\in S_{t-1}} f(S_{t-1} \cup \{i\})$
\STATE Set $S_t = S_{t-1} \cup \{i_t\}$
\ENDFOR
\STATE \textbf{return} $S_\text{greedy} = S_k$
\end{algorithmic}
\end{algorithm}

Note that there is non-determinacy in the case of ties. When we prove a result for the greedy algorithm, we require that the theorem holds for any choice of tie-breaking.

For the offline problem, this algorithm is attractive due to its simplicity and computational efficiency. The greedy algorithm runs in time $\mathcal{O}(nk)$. In active learning with linear regression, the greedy algorithm is equivalent to the myopic algorithm which is attractive due to removing the need for planning.
 
\section{Background and Related Work}
\label{sec:background}
\subsection{Active Learning}

Active learning studies the setting where there is a large pool of unlabeled data and a limited labeling budget. An active learning algorithm adaptively chooses which points to label next in order to achieve the best test performance. Several recent approaches study the myopic Bayesian setting \citep{gal2017deep,kirsch2019batchbald,mussmann2022active,smith2023prediction}, where the next point or batch of points is chosen by minimizing an expected cost (e.g., loss, entropy). Given the computational planning of planning, these methods only minimize the cost after a single step, similar to greedy algorithms.

\subsection{Bayesian Linear Regression}

Our primary motivation for the setting is Bayesian linear regression \citep{bishop2006pattern,pml2Book}. This model is defined by a prior covariance $\Sigma_0$, observation noise $\sigma^2$, and a fixed set of points $X = \{x_i\}_{i=1}^n \subset \mathbb{R}^d$. Then, the model is
\begin{align}
    \theta &\sim \mathcal{N}(0,\Sigma_0) \\
    \epsilon_i &\sim \mathcal{N}(0,\sigma^2) \\
    y_i &= x_i^\top \theta + \epsilon_i
\end{align}

A standard result is that the parameter posterior is $\theta | \{(x_i,y_i)\}_{i \in S} \sim \mathcal{N}(\mu_S, \Sigma_S)$ where
\begin{align}
    \Sigma_S &= \left(\Sigma_0^{-1} + \sigma^{-2} \textstyle\sum_{i \in S} x_i x_i^\top\right)^{-1} \\
    \mu_S &=\sigma^{-2} \Sigma_S \sum_{i \in S} y_i x_i
\end{align}

Two natural criteria for chosing $S$ are to minimize the variance of the posterior $\theta$ or to minimize the average variance of predictions on test points $\{\overline{x}_i\}_{i=1}^m$ \citep{chaloner1995bayesian}. In the first case,
\begin{align}
    \mathbb{E}_{\theta|S} \left[\|\theta - \mathbb{E}_{\theta|S}[\theta]\|^2\right] = \tr(\Sigma_S)
\end{align}

In the second case,
\begin{align}
    \frac{1}{m} \sum_{i=1}^m \text{Var}_{\theta|S}\left[\theta\cdot \overline{x}_i\right] = \tr\left(\Sigma_S  \frac{1}{m} \sum_{i=1}^m \overline{x}_i \overline{x}_i^\top\right)
\end{align}

In both cases, we can write the optimization criteria in the form of Problem Statement~\ref{problem}. For the first criteria, with $\Lambda = \Sigma_0$ and $v_i = \sigma^{-1} x_i$, $f(S) = \tr(\sigma_S)$.
For the second criteria, if $\Sigma_{\overline{X}} = \frac{1}{m}  \sum_{i=1}^m \overline{x}_i \overline{x}_i^\top$ is full-rank, then with $\Lambda = \Sigma_{\overline{X}}^{-1/2} \Sigma_0 \Sigma_{\overline{X}}^{-1/2}$ and $v_i = \sigma^{-1} \Sigma_{\overline{X}}^{-1/2} x_i$, $f(S) = \tr(\Sigma_S \Sigma_{\overline{X}})$.

Notably, in this case, since the objective value doesn't depend on the observation value $y_i$, only that it is observed, adaptivity serves no role. Therefore, active learning in this setting is equivalent to set optimization, removing a source of complexity for algorithmic analysis.

\subsection{Optimal Experimental Design}
Optimal experimental design is the classical statistical problem of choosing inputs at which to observe a response so as to most accurately estimate a parameter or prediction~\citep{kiefer1959optimum,fedorov2013theory,atkinson2007optimum}. Our goal is to select a set $S$ of design points $x_i$ in a candidate pool $X = \{x_1, \dots, x_n\}$. The two design criteria we consider are \emph{A-optimality}, a frequentist version of minimizing the parameter variance, and \emph{V-optimality}, a frequentist version of minimizing the prediction variance on a set of points. With $\lambda$-strength L2 regularization, the A-optimality criteria is $\mathrm{tr}((\lambda I + \sum_{i \in S} x_i x_i^\top)^{-1})$ and the V-optimality criteria for points $\{\overline{x}_i\}_{i=1}^m$ with $L = \frac{1}{m} \sum_{i=1}^m \overline{x}_i \overline{x}_i^\top$  is $\mathrm{tr}((\lambda I + \sum_{i \in S} x_i x_i^\top)^{-1}L)$. The positive definite matrix $L$ encodes the importance of estimation in different parameter directions.

In both cases, we can write them in our Problem Statement~\ref{problem}. For $A$-optimaliy, $\Lambda = \lambda I_d$ and $v_i = x_i$, and for $V$-optimality $\Lambda = \lambda L^{-1}$ and $v_i = L^{-1/2} x_i$. Solving either exactly under a cardinality constraint is NP-hard~\citep{li2025strong}, motivating both convex-relaxation~\citep{allen2021near,nikolov2022proportional} and combinatorial approaches \citep{madan2019combinatorial}.
 
\subsection{Submodularity, Approximate Submodularity, and Curvature}
A set function $g: 2^{[n]} \to \mathbb{R}$ is \emph{submodular} if for all $S \subseteq T \subseteq [n]$ and $i \notin T$,
\begin{equation*}
g(S \cup \{i\}) - g(S) \;\geq\; g(T \cup \{i\}) - g(T),
\end{equation*}
the diminishing-returns property. The classical theorem of \citet{nemhauser1978analysis} states that for monotone submodular $g$ with $g(\emptyset) = 0$, greedy achieves a $(1 - 1/e)$-approximation to the cardinality-constrained maximum $S^\star \in \arg\max_{S \subset [n], |S|=k} g(S)$, 
\begin{align}
    \frac{g(S_\text{greedy})}{g(S^\star)} \geq 1 - 1/e
\end{align}

For set functions that are not submodular, two parameters quantify whether the set function is approximately submodular.

\citet{das2018approximate} introduces the \emph{submodularity ratio}, 
\begin{equation}
\gamma(g) = \min_{S, L \subset [n]: S \cap L = \emptyset}\frac{\sum_{i \in S} \big(g(L \cup \{i\}) - g(L)\big)}{g(L \cup S) - g(L)},
\end{equation}

A set function is submodular exactly when $\gamma = 1$. \citet{pmlr-v70-bian17a} defines a \emph{curvature} that bounds the extent to which the marginal gain of an element shrinks as more elements are added

\begin{align}
    \alpha = 1 - \min_{S \subset T, i \not\in T} \frac{g(T \cup \{i\}) - g(T)}{g(S \cup \{i\}) - g(S)}
\end{align}

Note that $\gamma, \alpha \in [0,1]$. \citet{pmlr-v70-bian17a} shows that for monotone non-negative $g$ with submodularity ratio $\gamma$ and curvature $\alpha$, greedy attains a $\frac{1}{\alpha}\bigl(1 - e^{-\alpha\gamma}\bigr)$-approximation, recovering the $(1 - 1/e)$ bound when $\gamma = \alpha = 1$.
 
\subsection{Submodularity and A/V-Optimal Design}
To the best of our knowledge, the only existing approximation guarantees for the greedy algorithm applied to A-optimal design are found in  \citet{pmlr-v70-bian17a} and \citet{chamon2017approximate}. In both cases, the object of analysis is the reduction
\begin{equation}
\label{eq:reduction}
F_{\text{reduction}}(S) \;=\; f(\emptyset) - f(S),
\end{equation}
which is monotone non-decreasing, satisfies $F_{\text{reduction}}(\emptyset) = 0$. \citet{pmlr-v70-bian17a} provides a bound on the submodularity ratio and curvature, which yields a multiplicative approximation factor for $F_\text{reduction}$. \citet{chamon2017approximate} analysis $-F_\text{reduction}$ by their defined $\alpha$-supermodularity for monotone non-increasing functions, which can be equivalently defined as $\alpha$-submodularity. Intuitively, \citet{chamon2017approximate} defines the $\alpha$-submodularity as the minimal ratio of marginal gains as a function of the cardinalities of $S$ and $T$, and uses it to prove a multiplicative approximation ratio for the greedy algorithm. They then provide specific values of $\alpha$ for $-F_\text{reduction}$.
 
This style of guarantee has a basic limitation: it bounds how much risk has been removed, not how much remains. If $f(\emptyset)$ is sufficiently large, $F_\text{reduction}(S_\text{greedy})$ could be within a constant factor of $F_\text{reduction}(S^\star)$, while the objective itself $f(S_\text{greedy})$ could be an arbitrarily large factor away from $f(S^\star)$. For example, if $f(\emptyset)=1$, $f(S^\star)=\epsilon$, and $f(S_\text{greedy}) = \frac{1}{2}$, then $F_\text{reduction}(S_\text{greedy})/F_\text{reduction}(S^\star) \geq 1/2$ but $f(S^\star)/f(S_\text{greedy}) = 2\epsilon$. Consequently, prior to the present work, no approximation ratio guarantee on the achieved risk $f$ itself was known for greedy A- or V-optimal design.

\section{Upper Bound}
\label{sec:upper}
Previous work analyzes the risk reduction function $F_\text{reduction}(S) = f(\emptyset) - f(S)$. Instead, we focus on the reciprocal risk function, $F_\text{reciprocal}(S) = 1/f(S)$. Both these functions are monotonic decreasing functions of $f$, so the greedy algorithm remains unchanged. Our main result hinges on a lower bound on the submodularity ratio,

\begin{lemma}
\label{thm:submod}
    The submodularity ratio of $F_\text{reciprocal}$ is bounded as $\gamma(F_\text{reciprocal}) \geq (1+\mils)^{-1}$
\end{lemma}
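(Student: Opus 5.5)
The plan is to reduce the submodularity-ratio inequality to two one-sided estimates: a lower bound on each singleton gain, and an upper bound on the joint gain via concavity. Fix disjoint $S, L \subset [n]$. Write $A = \Lambda + \sum_{j\in L} v_jv_j^\top$, $M = A^{-1}$, $t = f(L) = \tr(M)$ and $B = \sum_{i\in S} v_iv_i^\top$. We need
\[
\sum_{i\in S}\Bigl(\tfrac{1}{f(L\cup\{i\})}-\tfrac{1}{t}\Bigr)\;\ge\;(1+\mils)^{-1}\Bigl(\tfrac{1}{\tr((A+B)^{-1})}-\tfrac1t\Bigr).
\]
Both sides will be compared to the common quantity $\tr(M^2B)/t^2$. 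This is exactly the directional derivative of $g(X) := 1/\tr(X^{-1})$ at $X = A$ in direction $B$.

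\textbf{Step 1 (singleton gains, lower bound).} By Sherman--Morrison,
\[
f(L\cup\{i\}) = t - a_i, \qquad a_i = \frac{v_i^\top M^2 v_i}{1+v_i^\top M v_i}.
\]
The reciprocal gain is $\frac{1}{t-a_i}-\frac1t = \frac{a_i}{t(t-a_i)} \ge \frac{a_i}{t^2}$, using $0\le a_i<t$. Since $A \succeq \Lambda$, we have $M \preceq \Lambda^{-1}$. Hence $v_i^\top M v_i \le v_i^\top\Lambda^{-1}v_i \le \mils$, which gives $a_i \ge v_i^\top M^2 v_i/(1+\mils)$. Summing over $i\in S$ yields
\[
\text{LHS} \;\ge\; \frac{\tr(M^2B)}{(1+\mils)\,t^2}.
\]
This is the only place $\mils$ enters.

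\textbf{Step 2 (joint gain, upper bound via concavity).} I claim $g(X) = 1/\tr(X^{-1})$ is concave on positive definite matrices. To see this, use the variational formula
\[
\frac{1}{\tr(X^{-1})} = \min_{W\in\mathbb{R}^{d\times d}:\ \tr(W)=1} \tr(W^\top X W).
\]
For $X\succ 0$ the objective is convex in $W$. The Lagrange condition $2XW=\mu I$ gives $W = X^{-1}/\tr(X^{-1})$, and substituting back gives the value $1/\tr(X^{-1})$. A pointwise minimum of functions that are linear in $X$ is concave, so $g$ is concave. Concavity then gives $g(A+B)-g(A) \le \langle \nabla g(A), B\rangle$. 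Differentiating, $\nabla g(A) = M^2/t^2$, so
\[
\text{RHS gain} = g(A+B)-g(A) \;\le\; \frac{\tr(M^2B)}{t^2}.
\]

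\textbf{Step 3 (combine).} Chaining Steps 1 and 2 gives LHS $\ge (1+\mils)^{-1}\,(g(A+B)-g(A))$, which is the claimed bound on $\gamma(F_\text{reciprocal})$. The degenerate case $B=0$ is trivial, since then both sides vanish.

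\textbf{Main obstacle.} The key step is Step 2. A direct computation of $t - \tr((A+B)^{-1})$ only yields $t\,f(L\cup S)$ in the denominator, which is too small to compare with $t^2$. The concavity of $1/\tr(X^{-1})$ circumvents this, so the care lies in verifying the variational characterization, in particular that the minimizer is attained and the value is correct. The rest is Sherman--Morrison bookkeeping.
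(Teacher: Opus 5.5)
Your proof is correct and follows the paper's argument. Both proofs compare the joint gain (from above) and each singleton gain (from below) to $\tr(A^{-1}BA^{-1})/\tr(A^{-1})^2$, with $\mils$ entering only through $v_i^\top A^{-1}v_i \le v_i^\top\Lambda^{-1}v_i$ in the Sherman--Morrison step. Your Step 2 is the paper's Lemma~\ref{lem:matrix} in different packaging: dividing that lemma by $\tr(A^{-1})\tr((A+B)^{-1})$ gives exactly your tangent-line bound for $1/\tr(X^{-1})$. The paper's Cauchy--Schwarz step amounts to evaluating your variational objective at the trace-one matrix $W = A^{-1}/\tr(A^{-1})$, which also means you could skip the differentiability argument entirely.
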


Combined with the result from \citet{das2018approximate} (or from \citet{pmlr-v70-bian17a} with $\alpha=1$), we find that,

\begin{corollary}
    \begin{align}
        \frac{F_\text{reciprocal}(S_\text{greedy})}{F_\text{reciprocal}(S^\star)} \geq 1 - \exp(-(1+\mils)^{-1})
    \end{align}
\end{corollary}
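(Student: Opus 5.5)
The plan is to reduce the corollary to the standard greedy guarantee under approximate submodularity of \citet{das2018approximate}, applied to $F_\text{reciprocal}$. Then I would plug in Lemma~\ref{thm:submod}. Throughout, $S^\star$ is read as a minimizer of $f$ over $|S|=k$, equivalently a maximizer of $F_\text{reciprocal}$; the $\arg\max$ in the problem statement is a typo for $\arg\min$.

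\textbf{Setup.} First I would record monotonicity. Adding $v_iv_i^\top \succeq 0$ to a positive definite matrix $M$ gives $(M+v_iv_i^\top)^{-1} \preceq M^{-1}$, so $f$ is non-increasing under inclusion and $F_\text{reciprocal}$ is non-decreasing. The Das--Kempe theorem is stated for normalized functions, but $F_\text{reciprocal}(\emptyset) = 1/\tr(\Lambda^{-1}) > 0$. I would therefore work with
\begin{align*}
G(S) = F_\text{reciprocal}(S) - F_\text{reciprocal}(\emptyset).
\end{align*}
This $G$ is monotone, satisfies $G(\emptyset)=0$, and has exactly the same marginal gains as $F_\text{reciprocal}$. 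Hence $\gamma(G)=\gamma(F_\text{reciprocal})$. Moreover, greedy on $G$, on $F_\text{reciprocal}$ and on $f$ make identical choices for every tie-breaking rule.

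\textbf{Greedy recursion.} For self-containedness I would reproduce the short argument with $\gamma=\gamma(F_\text{reciprocal})$. At step $t$, take $L=S_{t-1}$ and the disjoint set $S^\star\setminus S_{t-1}$ in the definition of $\gamma$. Using monotonicity, this gives
\begin{align*}
G(S^\star) - G(S_{t-1}) \le G(S^\star\cup S_{t-1}) - G(S_{t-1}) \le \frac{1}{\gamma}\sum_{i\in S^\star\setminus S_{t-1}} \bigl(G(S_{t-1}\cup\{i\})-G(S_{t-1})\bigr).
\end{align*}
The right-hand side is at most $\frac{k}{\gamma}\bigl(G(S_t)-G(S_{t-1})\bigr)$, because greedy picks a maximizer of the marginal gain; any tie-break works here. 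Writing $\Delta_t = G(S^\star)-G(S_t)$, this rearranges to $\Delta_t \le (1-\gamma/k)\Delta_{t-1}$. Iterating gives $\Delta_k \le (1-\gamma/k)^k \Delta_0 \le e^{-\gamma}G(S^\star)$, that is, $G(S_\text{greedy}) \ge (1-e^{-\gamma})G(S^\star)$.

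\textbf{Conclusion.} Adding back the constant gives
\begin{align*}
F_\text{reciprocal}(S_\text{greedy}) \ge (1-e^{-\gamma})F_\text{reciprocal}(S^\star) + e^{-\gamma}F_\text{reciprocal}(\emptyset) \ge (1-e^{-\gamma})F_\text{reciprocal}(S^\star).
\end{align*}
The last step uses $F_\text{reciprocal}(\emptyset)>0$. Finally, $\gamma\mapsto 1-e^{-\gamma}$ is increasing, and Lemma~\ref{thm:submod} gives $\gamma\ge(1+\mils)^{-1}$. Dividing by $F_\text{reciprocal}(S^\star)>0$ yields the stated bound.

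\textbf{Main obstacle.} The genuinely hard content is Lemma~\ref{thm:submod}, which I take as given. Within the corollary itself, the only delicate points are the normalization shift and the fact that Das--Kempe use a restricted ratio $\gamma_{U,k}$. The latter is harmless because our $\gamma$ is a minimum over all disjoint pairs and therefore lower-bounds the restricted ratio.
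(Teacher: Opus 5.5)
Your proposal is correct and follows the paper's route: the paper obtains the corollary by plugging Lemma~\ref{thm:submod} into the greedy guarantee of \citet{das2018approximate} (equivalently \citet{pmlr-v70-bian17a} with $\alpha=1$). Your shift to $F_\text{reciprocal}-F_\text{reciprocal}(\emptyset)$, the explicit recursion, and your remarks on the $\arg\max$ typo and the restricted ratio only make explicit details the paper leaves implicit.
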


The main result then follows from taking the reciprocal of the equation, which yields a complicated expression. The following algebraic proposition creates a more interpretable bound,
\begin{proposition}
\label{prop:algebra}
For any $h\geq 0$,
\begin{align}
    \frac{1}{1 - \exp(-(1+h)^{-1})} \leq h + \frac{1}{1-1/e} \leq h + 1.582
\end{align}
\end{proposition}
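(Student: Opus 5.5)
Substitute $x = (1+h)^{-1} \in (0,1]$, so $h = 1/x - 1$. The claim then reads: for all $x \in (0,1]$,
\begin{align}
    \frac{1}{1-e^{-x}} \leq \frac{1}{x} - 1 + c, \qquad c := \frac{1}{1-1/e},
\end{align}
and the second inequality, $c \leq 1.582$, is just numerics ($1/(1-e^{-1}) \approx 1.58198$). Define
\begin{align}
    \phi(x) := \frac{1}{1-e^{-x}} - \frac{1}{x}.
\end{align}
The first inequality is exactly $\phi(x) \leq c - 1$ on $(0,1]$. At $x=1$ we get $\phi(1) = c - 1$, so equality holds there. It therefore suffices to show that $\phi$ is nondecreasing on $(0,1]$; then $\phi(x) \leq \phi(1) = c-1$.

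\textbf{Monotonicity (the main step).} Write $\frac{1}{1-e^{-x}} = \frac{e^x}{e^x - 1} = 1 + \frac{1}{e^x-1}$, so $\phi(x) = 1 + \frac{1}{e^x - 1} - \frac{1}{x}$. Differentiating gives
\begin{align}
    \phi'(x) = \frac{1}{x^2} - \frac{e^x}{(e^x-1)^2}.
\end{align}
So $\phi'(x) \geq 0$ is equivalent to $(e^x - 1)^2 \geq x^2 e^x$. Since both sides are positive, this is equivalent to $e^x - 1 \geq x e^{x/2}$, which in turn reads $e^{x/2} - e^{-x/2} \geq x$, i.e.\ $2\sinh(x/2) \geq x$. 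This last inequality holds for all $x \geq 0$ because the Taylor series of $\sinh$ has only nonnegative coefficients. (Equivalently, $\phi$ is the familiar increasing function $\frac{1}{e^x-1} - \frac{1}{x} + 1$, which tends to $1/2$ as $x\to 0^+$.)

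Combining these steps gives the first inequality for all $h \geq 0$. At $h=0$ it holds with equality, which confirms that the constant $c$ cannot be improved. The only real obstacle is reducing the derivative sign condition to $2\sinh(x/2)\ge x$; everything else is substitution and evaluation.
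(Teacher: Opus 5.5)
Your proof is correct and is essentially the paper's argument written in the variable $x=(1+h)^{-1}$. The paper shows $g(h)-h$ is non-increasing by computing $g'(h)=(x/2)^2/\sinh^2(x/2)\le 1$; since $g(h)-h=\phi(x)+1$, this is exactly your sign condition $2\sinh(x/2)\ge x$ for $\phi'(x)\ge 0$, and both proofs then compare with the endpoint $h=0$ (your $x=1$).
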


The proof is in Appendix~\ref{app:proof-prop-algebra}. Our main result is then,

\begin{theorem}
    \begin{align}
        \frac{f(S_\text{greedy})}{f(S^\star)} \leq \mils + \frac{1}{1-1/e}
    \end{align}
\end{theorem}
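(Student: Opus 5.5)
The theorem follows by chaining the results already stated in this section. Since $F_\text{reciprocal}(S) = 1/f(S)$ and $f(S)>0$ (because $\Lambda$ is positive definite, so every matrix $\Lambda + \sum_{i\in S} v_iv_i^\top$ is positive definite with positive trace of its inverse), the corollary gives
\begin{align}
\frac{f(S^\star)}{f(S_\text{greedy})} = \frac{F_\text{reciprocal}(S_\text{greedy})}{F_\text{reciprocal}(S^\star)} \geq 1 - \exp\left(-(1+\mils)^{-1}\right).
\end{align}
The right-hand side is strictly positive, so we may take reciprocals:
\begin{align}
\frac{f(S_\text{greedy})}{f(S^\star)} \leq \frac{1}{1 - \exp\left(-(1+\mils)^{-1}\right)}.
\end{align}
Applying Proposition~\ref{prop:algebra} with $h = \mils \geq 0$ then yields the claimed bound $\mils + \frac{1}{1-1/e}$. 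Nonnegativity of $\mils$ holds because $\Lambda^{-1}$ is positive definite, so each $v_i^\top\Lambda^{-1}v_i \geq 0$.

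Two bookkeeping points need care. First, the guarantee of \citet{das2018approximate} is stated for greedy on a monotone set function under a cardinality constraint, with the comparator being a maximizer of $F_\text{reciprocal}$. That maximizer is exactly a minimizer of $f$, which is the intended meaning of $S^\star$ (the paper writes $\arg\max f$, but this should be read as $\arg\min f$). Greedy on $F_\text{reciprocal}$ also coincides with Algorithm~\ref{alg} for every tie-breaking rule, since $x\mapsto 1/x$ is strictly decreasing on $(0,\infty)$. Monotonicity of $F_\text{reciprocal}$ follows because adding $v_iv_i^\top$ increases the matrix in Loewner order, which decreases the trace of its inverse.

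Second, the cited greedy bound requires $g(\emptyset)=0$ and nonnegativity, whereas $F_\text{reciprocal}(\emptyset) = 1/\tr(\Lambda^{-1}) > 0$. I would therefore check that the corollary is applied correctly. One option is to use the normalized function $F_\text{reciprocal}(S) - F_\text{reciprocal}(\emptyset)$, whose submodularity ratio equals that of $F_\text{reciprocal}$, since $\gamma$ involves only differences. This gives
\begin{align}
F(S_\text{greedy}) - F(\emptyset) \geq \left(1-e^{-\gamma}\right)\left(F(S^\star) - F(\emptyset)\right),
\end{align}
and hence $F(S_\text{greedy}) \geq (1-e^{-\gamma})F(S^\star) + e^{-\gamma}F(\emptyset) \geq (1-e^{-\gamma})F(S^\star)$. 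So the unnormalized statement in the corollary holds a fortiori.

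This normalization check is the only potential obstacle. The real work, namely the lower bound on $\gamma$ from Lemma~\ref{thm:submod} and the algebraic inequality in Proposition~\ref{prop:algebra}, is done in those earlier results, which we take as given.
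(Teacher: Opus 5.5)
Your proposal is correct and follows the paper's route exactly: the corollary (Lemma~\ref{thm:submod} combined with the Das--Kempe greedy guarantee), then taking reciprocals, then Proposition~\ref{prop:algebra} with $h=\mils$. Your extra bookkeeping is valid and fills in details the paper leaves implicit: you apply the greedy guarantee to the normalized function $F_\text{reciprocal}(S)-F_\text{reciprocal}(\emptyset)$, which has the same submodularity ratio, and then drop the nonnegative term $e^{-\gamma}F_\text{reciprocal}(\emptyset)$; you also correctly read $S^\star$ as a minimizer of $f$.
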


Thus, greedy acheives a risk approximation guarantee of $h_\text{max}+1.582$, which scales linearly in the Maximum Initial Leverage Score (MILS).

\subsection{Tightness of Approximate Submodularity}

In this section, we show that our bound on $\gamma$ is tight and that the curvature is arbitrarily close to $1$. Note that approximate submodularity and curvature don't depend on the budget $k$ so we drop these from the following statements.

\begin{proposition}
\label{prop:gamma}
For any $n \in \mathbb{N}$ and $h > 0$, there exists a problem with $n$ vectors, $h_\text{max} = h$ and $d=n$ dimensions, such that the submodularity ratio is  $\gamma(F_\text{reciprocal}) \leq \frac{1}{(1+h)-\frac{h}{n}}$
\end{proposition}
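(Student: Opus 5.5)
We construct an explicit instance and exhibit one pair $(L,S)$ for which the ratio in the definition of $\gamma$ is at most the claimed bound; since $\gamma$ is a minimum over such pairs, this suffices. The natural choice is $\Lambda = I_n$ in $d=n$ dimensions with $v_i = \sqrt{h}\, e_i$ for $i\in[n]$. Then every initial leverage score equals $v_i^\top \Lambda^{-1} v_i = h$, so $\mils = h$ exactly. Take $L=\emptyset$ and $S=[n]$; these are disjoint, as the definition requires.

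The computation is elementary because everything is diagonal. We have $f(\emptyset) = \tr(I_n) = n$, so $F_\text{reciprocal}(\emptyset) = 1/n$. Adding a single element gives the spectrum $\{1+h,1,\dots,1\}$, hence
\begin{align}
f(\{i\}) = n-1+\frac{1}{1+h} = \frac{(n-1)(1+h)+1}{1+h}.
\end{align}
Each singleton therefore has marginal gain
\begin{align}
\frac{1+h}{(n-1)(1+h)+1}-\frac1n = \frac{h}{n\big((n-1)(1+h)+1\big)}.
\end{align}
Summing over the $n$ elements gives the numerator $\frac{h}{(n-1)(1+h)+1}$. For the denominator, $f([n]) = n/(1+h)$, so $F_\text{reciprocal}([n]) - F_\text{reciprocal}(\emptyset) = \frac{1+h}{n}-\frac1n = \frac hn$.

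Dividing the two quantities gives
\begin{align}
\gamma(F_\text{reciprocal}) \le \frac{n}{(n-1)(1+h)+1} = \frac{1}{(1+h) - \frac{1+h}{n}+\frac1n} = \frac{1}{(1+h)-\frac hn},
\end{align}
which is exactly the bound in the statement.

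The main step is finding the right construction; once the orthogonal, equal-leverage instance with $L=\emptyset$ is fixed, the rest is arithmetic. The only care needed is in simplifying the singleton marginal gain. As a check, taking $n\to\infty$ recovers $(1+h)^{-1}$, which matches Lemma~\ref{thm:submod} and shows that lemma is tight.
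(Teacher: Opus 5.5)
Your proof is correct and is essentially the paper's argument: the paper uses $\Lambda = \frac{1}{h} I_n$ with $v_i = e_i$, which is your instance rescaled by $1/h$ (leaving both $\mils$ and the ratio defining $\gamma$ unchanged), and evaluates the same ratio at $L=\emptyset$, $S=[n]$. Your arithmetic checks out, including the simplification $(n-1)(1+h)+1 = n(1+h)-h$.
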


The proof is in Appendix~\ref{app:proof-prop-gamma}. Note that as $n \rightarrow \infty$, $\gamma \rightarrow (1+h_\text{max})^{-1}$ implying that Theorem~\ref{thm:submod} is tight (when examining all values of $n$). Also, note that for all $h>0$ and $d \geq 2$, the submodularity ratio is strictly less than $1$, so $F_\text{reciprocal}$ is not submodular \citep{das2018approximate}. Furthermore, the curvature $\alpha$ for $F_\text{reciprocal}$ is not in general bounded away from $1$, so the more refined analysis using curvature \citep{pmlr-v70-bian17a} doesn't provide an improvement.

\begin{proposition}
\label{prop:alpha}
For any $n \in \mathbb{N}$ and $h > 0$, there exists a problem with $n$ vectors, $h_\text{max} = h$, and $d=2$ where the curvature of $F_\text{reciprocal}$ is greater than $1 - \left(\frac{1}{1+\frac{(n-1)h}{2}}\right)^2$
\end{proposition}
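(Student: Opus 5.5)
The plan is to exhibit, for each $n$ and $h$, one explicit instance together with one triple $S \subset T$, $i \notin T$ whose ratio of marginal gains of $F_\text{reciprocal}$ is below $\bigl(1+\tfrac{(n-1)h}{2}\bigr)^{-2}$. The curvature is defined as one minus a minimum, so a single such witness already forces $\alpha > 1 - \bigl(1+\tfrac{(n-1)h}{2}\bigr)^{-2}$. I would work in $d=2$ with a diagonal prior $\Lambda = \mathrm{diag}(1, c)$, where $c>0$ is a free parameter. All vectors are parallel to $e_1$, namely $v_j = \sqrt{h}\, e_1$ for every $j$, so each leverage score is $v_j^\top \Lambda^{-1} v_j = h$ and hence $\mils = h$ exactly. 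The witness is $S = \emptyset$, $T = [n]\setminus\{n\}$ with $|T| = m := n-1$, and $i = n$.

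Because every matrix involved is diagonal, Sherman--Morrison is not even needed. Writing $a_j = (1+jh)^{-1}$ for the $e_1$-entry of the posterior covariance after $j$ copies and $\beta = c^{-1}$, we have $f = a_j + \beta$ and $F_\text{reciprocal} = 1/(a_j+\beta)$. The marginal gain from $j$ to $j+1$ copies is
\begin{align*}
\frac{a_j - a_{j+1}}{(a_j+\beta)(a_{j+1}+\beta)}, \qquad a_j - a_{j+1} = \frac{h}{(1+jh)(1+(j+1)h)} .
\end{align*}
Taking the ratio of the gain at $j=m$ to the gain at $j=0$ gives
\begin{align*}
\frac{1+h}{(1+mh)(1+(m+1)h)} \cdot \frac{(1+\beta)\bigl((1+h)^{-1}+\beta\bigr)}{\bigl(a_m+\beta\bigr)\bigl(a_{m+1}+\beta\bigr)} .
\end{align*}
The second factor exceeds $1$ and tends to $1$ as $\beta \to \infty$, i.e.\ as $c \to 0$, so the second coordinate of the prior should be made very uncertain. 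The remaining step is to compare the first factor with the target $(1+mh/2)^{-2}$, choosing $c$ small enough to absorb the slack.

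That comparison is the step I expect to be the main obstacle. Expanding shows that $(1+h)(1+mh/2)^2 < (1+mh)(1+mh+h)$ holds comfortably for moderate $h$, but it is not obviously true for every $h>0$ and $m \geq 1$: for $m=1$ it reduces to $h<4$. The choice $\Lambda = I$ ($\beta = 1$) also misses, by exactly an $mh^2/4$ term. So the first thing I would try is to verify which regime the intended statement needs and whether the collinear construction suffices there. If it does not, I would modify the construction while keeping $S=\emptyset$: tilt the vectors in $T$ slightly off $e_1$, or split $T$ between two nearby directions, so that the $e_1$-variance shrinks faster relative to the single-point gain. I would then recompute the two gains, now with $2\times 2$ rank-one updates and Sherman--Morrison, optimizing the tilt angle together with $c$.

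The factor $\tfrac{(n-1)h}{2}$ in the target suggests that the intended construction spreads the $n-1$ points of $T$ so that each coordinate of the effective precision grows like $(n-1)h/2$. I would therefore also test the symmetric configuration of vectors at angles $\pm\theta$ around $e_1$, with $\Lambda = I$, and send $\theta$ to its optimal value. Whichever configuration works, the final step is the same: show the witness ratio is strictly below $\bigl(1+\tfrac{(n-1)h}{2}\bigr)^{-2}$, and conclude from the definition of curvature that $\alpha$ exceeds the claimed bound.
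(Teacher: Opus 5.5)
Your orientation of the witness (gain at the larger set $T$ over gain at $S=\emptyset$) is correct, and so are your computations for the collinear instance. The gap is that the proposal stops exactly where the proof has to happen, and the instance as chosen cannot finish it. With $v_n=\sqrt h\,e_1$, the infimum over $c$ of your ratio is $\frac{1+h}{(1+mh)(1+(m+1)h)}$, and
\[
(1+mh)(1+mh+h)-(1+h)\Bigl(1+\tfrac{mh}{2}\Bigr)^2 = mh\Bigl(1+\tfrac{mh(3-h)}{4}\Bigr),
\]
which is $\le 0$ whenever $mh(h-3)\ge 4$. So for every $h\ge 4$, and for every $h>3$ once $n$ is large, no choice of $c$ works, and the tilting and splitting fallbacks are never carried out.

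The missing idea is that only $\mils$ is tied to $h$, not the leverage of the element being added. Keep $v_1=\dots=v_{n-1}=\sqrt h\,e_1$, so that $\mils=h$, but take $v_n=\sqrt{\eta}\,e_1$ with $0<\eta<h$. With $\beta=c^{-1}$, your own computation for the witness $(\emptyset,[n-1],n)$ then gives
\[
\frac{1+\eta}{(1+mh)(1+mh+\eta)}\cdot\frac{(1+\beta)\bigl((1+\eta)^{-1}+\beta\bigr)}{\bigl((1+mh)^{-1}+\beta\bigr)\bigl((1+mh+\eta)^{-1}+\beta\bigr)}\le\frac{(1+\eta)(1+c)^2}{(1+mh)^2}.
\]
This is below $(1+mh/2)^{-2}$ as soon as $(1+\eta)(1+c)^2<\bigl(\frac{1+mh}{1+mh/2}\bigr)^2$. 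The right side exceeds $1$ for $m\ge 1$, so taking $\eta$ and $c$ small finishes the proof. For $n=1$ there is no triple with $S\subsetneq T$, so the strict bound $\alpha>0$ cannot be witnessed; the statement should be read for $n\ge 2$.

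Your other fallback, $\Lambda=I$ with a symmetric configuration, is a dead end. In $d=2$ one has $1/\tr(A^{-1})=\det A/\tr A$, so adding $vv^\top$ to $A$ increases it by $\frac{v^\top\mathrm{adj}(A)^2v}{\tr A\,(\tr A+\|v\|^2)}$. Write $A_T=\Lambda+\sum_{j\in T}v_jv_j^\top$. With $\Lambda=I$ you get $\lambda_{\min}(A_T)\ge 1$ and $\tr A_T\le 2+mh$. Hence the ratio for $S=\emptyset$ is at least $\frac{2(2+\|v_i\|^2)}{(2+mh)(2+mh+\|v_i\|^2)}$, which is strictly greater than $\frac{4}{(2+mh)^2}$, exactly the target. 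The slack has to come from an anisotropic prior, as in your $c\to 0$.

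For comparison, the paper takes $\Lambda=\mathrm{diag}(1/h,\,1/h+m)$, $v_1=\dots=v_m=e_1$, and an orthogonal added vector $v_{m+1}=e_2$. It bounds $1-\alpha$ by the gain of $v_{m+1}$ at $\emptyset$ divided by its gain at $[m]$, i.e.\ $\frac{f([m])f([m+1])}{f(\emptyset)f(\{m+1\})}$. Relative to the curvature definition stated in the paper, that ratio is upside down. In this instance the $e_1$'s increase the gain of $e_2$, so the correctly oriented ratio is $\frac{f(\emptyset)f(\{m+1\})}{f([m])f([m+1])}>1$ and gives nothing. For example, with $n=2$ and $h=1$ the gains are $1/12$ at $\emptyset$ and $1/5$ at $\{1\}$. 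So do not switch to the paper's construction; your collinear route, with a short added vector, is what actually proves the proposition.
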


The proof is in Appendix~\ref{app:proof-prop-alpha}. Note the curvature approaches $1$ as $n \rightarrow \infty$.

\subsection{Proof of Lemma \ref{thm:submod}}

In this section, we prove Lemma~\ref{thm:submod} to by showing $F_\text{reciprocal}$ is approximately submodular. First, we prove a specific matrix identity.

\begin{lemma}
\label{lem:matrix}
    For any symmetric positive definite matrix $A$ and symmetric positive semi-definite matrix $B$
    \begin{align}
        \tr\left(A^{-1}\right) - \tr\left(\left(A+B)\right)^{-1}\right) \leq \frac{\tr\left(A^{-1}BA^{-1}\right) \tr\left(\left(A+B\right)^{-1}\right)}{\tr\left(A^{-1}\right)}
    \end{align}
\end{lemma}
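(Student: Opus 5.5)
The plan is to whiten by $A$ so that everything is expressed through one positive semidefinite matrix $M$. Diagonalizing $M$ then reduces the claim to a scalar inequality, which follows from Cauchy--Schwarz.

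\textbf{Step 1 (reduction).} Write $C = A+B$. The resolvent identity $A^{-1} - C^{-1} = A^{-1} B C^{-1}$ gives $\tr(A^{-1}) - \tr(C^{-1}) = \tr(A^{-1}BC^{-1})$. Define $M = A^{-1/2} B A^{-1/2}$, which is symmetric positive semidefinite. Then $C = A^{1/2}(I+M)A^{1/2}$, so $C^{-1} = A^{-1/2}(I+M)^{-1}A^{-1/2}$. Using cyclicity of the trace and $P := A^{-1}$, the four traces become
\begin{align*}
\tr(C^{-1}) &= \tr\bigl((I+M)^{-1}P\bigr), &
\tr(A^{-1}BA^{-1}) &= \tr(MP), \\
\tr(A^{-1}BC^{-1}) &= \tr\bigl(M(I+M)^{-1}P\bigr), &
\tr(A^{-1}) &= \tr(P).
\end{align*}
The lemma is therefore equivalent to
\begin{align*}
\tr(P)\,\tr\bigl(M(I+M)^{-1}P\bigr) \le \tr(MP)\,\tr\bigl((I+M)^{-1}P\bigr).
\end{align*}

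\textbf{Step 2 (scalarize).} Diagonalize $M = U\,\mathrm{diag}(m_1,\dots,m_d)\,U^\top$ with $m_j \ge 0$. Set $p_j = (U^\top P U)_{jj}$; these are positive because $P$ is positive definite. Every matrix multiplying $P$ above is a function of $M$, so it is diagonal in the basis $U$. Hence each trace is a weighted sum over the $p_j$. The inequality becomes
\begin{align*}
\Bigl(\sum_j p_j\Bigr)\Bigl(\sum_j p_j \tfrac{m_j}{1+m_j}\Bigr) \le \Bigl(\sum_j p_j m_j\Bigr)\Bigl(\sum_j \tfrac{p_j}{1+m_j}\Bigr).
\end{align*}

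\textbf{Step 3 (Cauchy--Schwarz).} Write $\tfrac{m_j}{1+m_j} = 1 - \tfrac{1}{1+m_j}$ and $m_j = (1+m_j) - 1$, and let $\Sigma = \sum_j p_j$. Then the left side equals $\Sigma^2 - \Sigma \sum_j \tfrac{p_j}{1+m_j}$. The right side equals $\bigl(\sum_j p_j(1+m_j)\bigr)\bigl(\sum_j \tfrac{p_j}{1+m_j}\bigr) - \Sigma\sum_j \tfrac{p_j}{1+m_j}$. Subtracting, right minus left is
\begin{align*}
\Bigl(\sum_j p_j(1+m_j)\Bigr)\Bigl(\sum_j \tfrac{p_j}{1+m_j}\Bigr) - \Bigl(\sum_j p_j\Bigr)^2.
\end{align*}
This is nonnegative by Cauchy--Schwarz, applied to the vectors $\bigl(\sqrt{p_j(1+m_j)}\bigr)_j$ and $\bigl(\sqrt{p_j/(1+m_j)}\bigr)_j$, whose inner product is $\sum_j p_j$.

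\textbf{Main obstacle.} The key step is Step 1: choosing the whitening $A^{-1/2}(\cdot)A^{-1/2}$ rather than working directly with $B$. Only in that form do all four traces become weighted sums against a single family of weights $p_j$. After that, the algebra in Step 3 is routine, provided one checks that the $p_j$ are positive and the $m_j$ are nonnegative so that the square roots are defined.
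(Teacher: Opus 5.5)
Your proof is correct, and underneath the whitening it is the paper's argument. The inequality you finish with, $\bigl(\sum_j p_j(1+m_j)\bigr)\bigl(\sum_j p_j/(1+m_j)\bigr)\ge\bigl(\sum_j p_j\bigr)^2$, is exactly $\tr\bigl((A+B)^{-1}\bigr)\,\tr\bigl(A^{-1}(A+B)A^{-1}\bigr)\ge\tr\bigl(A^{-1}\bigr)^2$ written in the eigenbasis of $M$. Your Step 3 rearrangement is the paper's ``divide by $\tr(A^{-1})$, subtract $\tr((A+B)^{-1})$'' step run in reverse.

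The paper gets that inequality in one line from the trace form of Cauchy--Schwarz, $\tr(X^\top Y)^2\le\tr(X^\top X)\,\tr(Y^\top Y)$. It takes $X=(A+B)^{-1/2}$ and $Y=(A+B)^{1/2}A^{-1}$, so that $X^\top Y=A^{-1}$, $X^\top X=(A+B)^{-1}$ and $Y^\top Y=A^{-1}(A+B)A^{-1}$. The whitening, the eigendecomposition and the resolvent identity are therefore all dispensable.

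Your remark that the whitening is the key step overstates it: it only serves to reduce the matrix Cauchy--Schwarz to the vector one. What your version buys is a fully explicit scalar check with weights $p_j>0$, $m_j\ge 0$. What the paper's buys is brevity.
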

\begin{proof}
    From the matrix version of the Cauchy–Schwarz inequality,
    \begin{align}
        \tr\left(X^T Y\right)^2 \leq \tr\left(X^T X\right) \tr\left(Y^T Y\right)
    \end{align}

    Let $X = (A+B)^{-1/2}$ and $Y=(A+B)^{1/2} A^{-1}$.
    \begin{align}
        \tr\left(A^{-1}\right)^2 &\leq \tr\left((A+B\right)^{-1}) \tr\left(A^{-1} (A + B) A^{-1}\right) \\
        &= \tr\left((A+B)^{-1}\right) \left[\tr\left(A^{-1}\right) + \tr\left(A^{-1} B A^{-1}\right) \right]
    \end{align}

    Dividing both sides by $\tr\left(A^{-1}\right)$ and subtracting $\tr\left((A+B)^{-1}\right)$ from both sides yields the result.
\end{proof}

The following lemma is a restated version of Lemma~\ref{thm:submod} in different notation with $\gamma(F_\text{reciprocal})$ written out.

\begin{lemma-repeat}[\ref{thm:submod} (restated)]
For any $L, S \subset [n]$ with $L \cap S = \emptyset$,
    \begin{align}
         \frac{\sum_{i \in S} F_\text{reciprocal}(L \cup \{i\}) - F_\text{reciprocal}(L)}{F_\text{reciprocal}(L \cup S) - F_\text{reciprocal}(L)} \geq (1+h_\text{max})^{-1}  \
    \end{align}
\end{lemma-repeat}
\begin{proof}
    Define $A = \Lambda + \sum_{i \in L} v_i v_i^\top$ and $B = \sum_{i \in S} v_i v_i^\top$.
    \begin{align}
    F_\text{reciprocal}(L \cup S) - F_\text{reciprocal}(L) &= \frac{1}{f(L \cup S)} - \frac{1}{f(L)} \\
    &= \frac{\text{Tr}(A^{-1}) - \text{Tr}((A+B)^{-1})}{\text{Tr}(A^{-1}) \text{Tr}((A+B)^{-1})} \\
    &\leq \frac{\text{Tr}(A^{-1} B A^{-1})}{\text{Tr}(A^{-1})^2} \\
        &= \sum_{i \in S} \frac{\tr\left(A^{-1} v_i v_i^\top A^{-1}\right)}{\tr\left(A^{-1}\right)^2} \\
        &\leq \sum_{i \in S} \frac{(1+\mils)}{1 + v_i^\top A^{-1} v_i} \frac{\tr\left(A^{-1} v_i v_i^\top A^-1\right)}{ \tr\left(A^{-1}\right)^2} \\
        &= (1+\mils) \sum_{i \in S} \frac{\tr\left(A^{-1}\right) - \tr\left((A+v_iv_i^\top\right)^{-1})}{\tr\left(A^{-1}\right)^2} \\
        &\leq (1+\mils) \sum_{i \in S} \frac{\tr\left(A^{-1}\right) - \tr\left((A+v_iv_i^\top\right)^{-1})}{\tr\left(A^{-1}\right) \tr\left((A+v_iv_i^\top)^{-1}\right)} \\
        &= (1+\mils) \sum_{i \in S} \frac{1}{f(L \cup \{i\})} - \frac{1}{f(L)} \\
        &= (1+\mils) \sum_{i \in S} F_\text{reciprocal}(L \cup \{i\}) - F_\text{reciprocal}(L)
    \end{align}
    
    The lines follow from: definition of $F_\text{reciprocal}$, definition of $f$ and finding a common denominator, Lemma~\ref{lem:matrix}, linearity of $B$ and $\tr$, definition of $\mils$ and monotonicity of $X^{-1}$ for symmetric positive definite matrices, the Sherman-Morrison formula, the monotonicity of $X^{-1}$ for symmetric positive definite matrices, simplifying fractions and definition of $f$, and definition of $F_\text{reciprocal}$. Finally, rearranging terms yields the result.
\end{proof}

\section{Lower Bound}
\label{sec:lower}
We now construct an explicit problem showing that the upper bound is tight up to constants. 

\begin{theorem}
\label{thm:lower}
    For any $h>0$ and $d \geq 4$, if there exists an order $d$ Hadamard matrix, then there exists a $d$-dimensional problem with $n=2d$ vectors such that $\mils = \max(h,4)$ and for a cardinality constraint of $k=d$,
    \begin{align}
        \frac{f(S_\text{greedy})}{f(S^\star)} \geq \frac{1+h}{5}
    \end{align}
\end{theorem}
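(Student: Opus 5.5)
The plan is to build an explicit instance from two families of $d$ vectors: a ``good'' family whose $d$ members are mutually orthogonal with respect to $\Lambda$, and a ``trap'' family built from the rows $H_1,\dots,H_d$ of the order-$d$ Hadamard matrix. I would begin with $\Lambda = I_d$. The optimal value is then upper bounded simply by exhibiting one feasible set: taking the good family $\{\sqrt{h}\,e_i\}_{i=1}^d$ gives $f(S^\star) \le d/(1+h)$. The work is in showing that greedy, under every tie-breaking rule, never selects good vectors and instead ends with the $d$ trap vectors, which give $f(S_\text{greedy}) = d/5$. That yields the ratio $(1+h)/5$.

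\textbf{Greedy bookkeeping.} By Sherman--Morrison, adding $v$ to a current matrix $A$ reduces $f$ by
\begin{align}
\Delta(v;A) = \frac{v^\top A^{-2} v}{1+v^\top A^{-1} v}.
\end{align}
Take the trap vectors $w_j = 2H_j/\sqrt{d}$, which have leverage $4$. The Hadamard property makes them orthogonal, so after greedy has chosen $t$ of them, $A_t = I + 4P_t$ with $P_t$ a rank-$t$ projection. An unchosen trap vector therefore always gains exactly $4/5$. For a basis vector, the flatness of the Hadamard rows gives $e_i^\top P_t e_i = t/d$ for every $i$ and $t$. Hence
\begin{align}
e_i^\top A_t^{-1}e_i = 1-\tfrac{4t}{5d}, \qquad e_i^\top A_t^{-2}e_i = 1-\tfrac{24t}{25d},
\end{align}
so every good vector's gain is an explicit scalar function $g_t(h)$, the same for all $i$. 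The argument then reduces to the inequality $g_t(h) < 4/5$ for all $0\le t<d$. Because the trap family is orthogonal and each trap gain stays $4/5$, greedy ends exactly at $A_d = 5I$, giving $f(S_\text{greedy}) = d/5$.

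\textbf{Main obstacle.} The difficulty is that inequality. With $\Lambda=I$ and good vectors of leverage $h$, the gain at $t=0$ is $g_0(h) = h/(1+h)$, which exceeds $4/5$ as soon as $h>4$. So the naive instance fails precisely in the interesting regime, where MILS equals $h$. To fix this I would break the isotropy of the good family while keeping its \emph{optimal} value at order $d/(1+h)$.

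The first thing I would try is an anisotropic $\Lambda = \mathrm{diag}(\lambda_i)$ with small prior variances, chosen so that good vectors keep leverage $h$ but have a small $v^\top\Lambda^{-2}v$. At the same time, the Hadamard vectors would be rescaled so that their leverage is $4$ but their direct trace reduction is large; this is where the $\max(h,4)$ in the statement enters.

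A second option is to make the good family individually weak but jointly strong. For example, pairs $\sqrt{h}(e_{2i-1}\pm e_{2i})/\sqrt2$ have small individual gains relative to the traps, yet are orthogonal as a full set. This uses the freedom of $n=2d$ and $d\ge4$. In either variant the verification keeps the structure above: compute $g_t$ in closed form using $e_i^\top P_t e_i = t/d$, then check $g_t < 4/5$ (strictly, to defeat ties) uniformly in $t$.

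The final step is to compute both objective values and check that their ratio is at least $(1+h)/5$, absorbing constant losses from the rescaling into the factor $5$.
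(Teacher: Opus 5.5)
There is a genuine gap. You locate the obstacle correctly, but neither fix can remove it, because your two families play the wrong roles relative to $\Lambda$. For any $\Lambda$ and any trap $w$ with $w^\top\Lambda^{-1}w\le 4$, the first-step gain satisfies $\Delta(w;\Lambda)\le\frac45\lambda_{\max}(\Lambda^{-1})$, since $w^\top\Lambda^{-2}w\le\lambda_{\max}(\Lambda^{-1})\,w^\top\Lambda^{-1}w$. Now take $\Lambda=\mathrm{diag}(\lambda_i)$ with axis-aligned good vectors of leverage $h$. The good vector on the coordinate $i^\ast=\arg\min_i\lambda_i$ gains exactly $\frac{h}{1+h}\lambda_{\max}(\Lambda^{-1})$. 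This strictly beats every trap once $h>4$, however the Hadamard traps are rescaled, so greedy already takes a good vector at $t=0$ and anisotropy in the standard basis cannot help. You also cannot weaken that vector. Since $f(S_\text{greedy})\le\tr(\Lambda^{-1})\le d\,\lambda_{\max}(\Lambda^{-1})$, certifying a ratio of $(1+h)/5$ forces your witness set to shrink the top-variance coordinate by a factor of at least $(1+h)/(5d)$. The paired vectors fail for a simpler reason. With $\Lambda=I$ the problem is rotation invariant, every leverage-$h$ vector has initial gain exactly $h/(1+h)$, and the Hadamard structure plays no role at all.

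The missing idea is to reverse the roles: the traps must be eigenvectors of an anisotropic $\Lambda$, and the good vectors must be flat across its eigenbasis. The paper takes $\Lambda_{jj}=r^{-(j-1)}$, traps $u_j=\sqrt{4\Lambda_{jj}}\,e_j$, and good vectors $w_i=\sqrt{h/d}\,\Lambda^{1/2}H_{:,i}$. Then $\sum_j u_ju_j^\top=4\Lambda$ and $\sum_i w_iw_i^\top=h\Lambda$, so the ratio is exactly $(1+h)/5$ with no constant loss. (After the orthogonal change of basis $H/\sqrt d$, this becomes your Hadamard-trap picture, but with $\Lambda$ diagonal in the Hadamard basis rather than the standard one.)

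The key step is the $h$-free bound $\Delta(w;A)\le w^\top A^{-2}w/w^\top A^{-1}w$. Because each $w_i$ spreads its leverage evenly over the coordinates, its gain is at most a spread-out weighted average of the diagonal of $A_k^{-1}$. The next trap, by contrast, gains $\frac45 r^{k}$ from the largest remaining prior variance, and geometric decay lets this maximum beat the average. This reduces greedy's behaviour to a scalar inequality in $(d,r,k)$. The paper verifies it for all $k$ using convexity in $k$, the choice $r=e^{-1/d}$, monotonicity in $d$, and a numerical check at $d=4$. Your flatness computation $e_i^\top P_t e_i=t/d$ is the right kind of tool, but it has to be applied to the good family relative to the eigenbasis of $\Lambda$, not to the traps.
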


The proof is in Appendix~\ref{app:lower-bound}. Note that by using Sylvester's construction, there are Hadamard matrices for any order that is a power of $2$. To make it explicit, the example for $d=4$ is

\begin{align*}
    \Lambda &= \begin{bmatrix}
        \exp\left(\frac{0}{4}\right) & 0 & 0 & 0 \\
        0 & \exp\left(\frac{1}{4}\right) & 0 & 0 \\
        0 & 0 & \exp\left(\frac{2}{4}\right) & 0 \\
        0 & 0 & 0 & \exp\left(\frac{3}{4}\right)
    \end{bmatrix} \\
    v_1 &= \begin{bmatrix}
        2 \exp\left(\frac{0}{8}\right) & 0 & 0 & 0
    \end{bmatrix} \\
    v_2 &= \begin{bmatrix}
        0 & 2 \exp\left(\frac{1}{8}\right) & 0 & 0
    \end{bmatrix} \\
    v_3 &= \begin{bmatrix}
        0 & 0 & 2 \exp\left(\frac{2}{8}\right) & 0
    \end{bmatrix} \\
    v_4 &= \begin{bmatrix}
        0 & 0 & 0 & 2 \exp\left(\frac{3}{8}\right)
    \end{bmatrix} \\
    v_5 &= \begin{bmatrix}
        \frac{\sqrt{h} \exp\left(\frac{0}{8}\right)}{2} & \frac{\sqrt{h} \exp\left(\frac{1}{8}\right)}{2} & \frac{\sqrt{h} \exp\left(\frac{2}{8}\right)}{2} & \frac{\sqrt{h} \exp\left(\frac{3}{8}\right)}{2}
    \end{bmatrix} \\
    v_6 &= \begin{bmatrix}
        \frac{\sqrt{h} \exp\left(\frac{0}{8}\right)}{2} & -\frac{\sqrt{h} \exp\left(\frac{1}{8}\right)}{2} & \frac{\sqrt{h} \exp\left(\frac{2}{8}\right)}{2} & -\frac{\sqrt{h} \exp\left(\frac{3}{8}\right)}{2}
    \end{bmatrix} \\
    v_7 &= \begin{bmatrix}
        \frac{\sqrt{h} \exp\left(\frac{0}{8}\right)}{2} & \frac{\sqrt{h} \exp\left(\frac{1}{8}\right)}{2} & -\frac{\sqrt{h} \exp\left(\frac{2}{8}\right)}{2} & -\frac{\sqrt{h} \exp\left(\frac{3}{8}\right)}{2}
    \end{bmatrix} \\
    v_8 &= \begin{bmatrix}
        \frac{\sqrt{h} \exp\left(\frac{0}{8}\right)}{2} & -\frac{\sqrt{h} \exp\left(\frac{1}{8}\right)}{2} & -\frac{\sqrt{h} \exp\left(\frac{2}{8}\right)}{2} & \frac{\sqrt{h} \exp\left(\frac{3}{8}\right)}{2}
    \end{bmatrix}  
\end{align*}

where the greedy algorithm will choose $\{1,2,3,4\}$ which is outperformed by choosing $\{5,6,7,8\}$.

We ran numerical experiments for $h=10,100,1000$ confirming that $h_\text{max} = 10,100,1000$ and $\frac{f(S_g)}{f(S^*)} = \frac{11}{5}, \frac{101}{5}, \frac{1001}{5}$. The code snippet and output is shown in Appendix~\ref{app:lower-bound-experiment}.

While Theorem~\ref{thm:lower} is stated with simple constants, in Appendix~\ref{app:lower-bound}, we prove a slightly more general result which yields slightly better constants for large $d$. From numerical results, it appears our construction cannot yield examples with approximation ratios lower than $\frac{1+h}{3.23}$.

\section{Illustrative Numerical Example}
\label{sec:example}
For illustration of our bounds, we consider a simple setting with $\Lambda = I_d$ and $x_i$ drawn uniformly from the unit sphere $S^{d-1}$. We then run greedy with $d=20$, $n=1000$, and $k\in[100]$. For each iteration, we can compute a bound on $f(S^\star)$. For an approximation ratio lower bound of $c$ on $F_\text{reciprocal}$ (e.g., Theorem~\ref{thm:submod}),
\begin{align}
    f(S^\star) \geq c f(S_\text{greedy})
\end{align}

For an approximation ratio lower bound of $c$ on $F_\text{reduction}$, (e.g., as in \citet{pmlr-v70-bian17a} or \citet{chamon2017approximate}),
\begin{align}
    f(S^\star) \geq f(\emptyset) - \frac{f(\emptyset)-f(S_\text{greedy})}{c}
\end{align}

This experiment was implemented (code and results in Appendix~\ref{app:illustrative-example}) and the results are shown in Figure~\ref{fig}. We can see that the bound based on $F_\text{reduction}$ from \citet{chamon2017approximate} becomes vacuous around $k=10$. The numerical results conclude that the bound from \citet{pmlr-v70-bian17a} (not shown in figure) is vacuous even for $k=1$ due to the $\gamma$ and $\alpha$ bounds being very close to $0$ and $1$, respectively.

\begin{figure}[h]
\centerline{\includegraphics[scale=0.9]{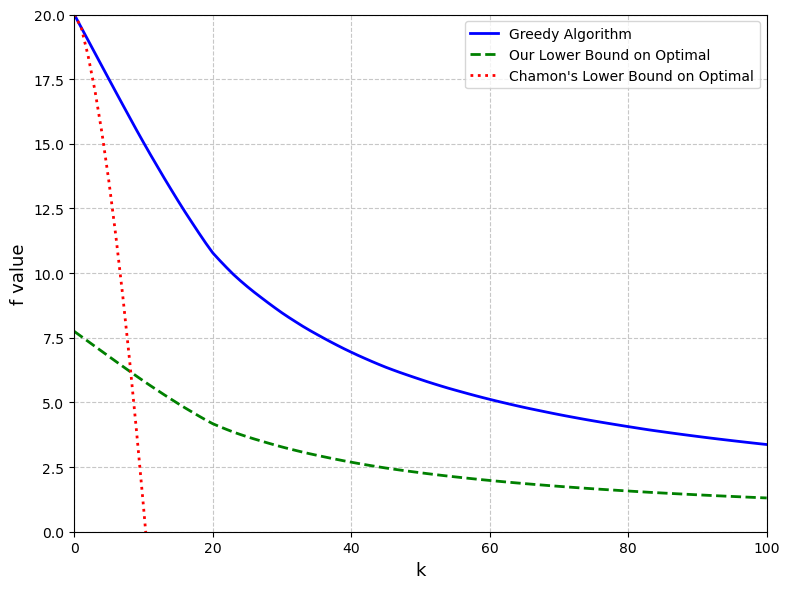}}
\caption{The objective value of the greedy algorithm and the lower bound on the optimal objective from Theorem~\ref{thm:submod} and \citet{chamon2017approximate}.}
\label{fig}
\end{figure}

\section{Discussion}
\label{sec:discussion}

The results in this work not only provide the first $A/V$-optimality criteria approximation ratio guarantee (not on the reduction) for the common greedy heuristic, but addresses a more fundamental question in active learning. Nearly all active learning algorithms avoid planning over multiple data labeling iterations by focusing only on the current data labeling iteration. To our knowledge, there is no existing guarantee on the test loss approximation ratio for myopic algorithms. Here, for Bayesian Logistic Regression, that myopic data labeling nearly matches fully planned data labeling, at least when the MILS is small. We hope this work serves as a foundation for future analyses of other models, especially those where adaptive selection is not equivalent to offline selection.

\bibliographystyle{plainnat}
\bibliography{bibliography}
%%%%%%%%%%%%%%%%%%%%%%%%%%%%%%%%%%%%%%%%%%%%%%%%%%%%%%%%%%%%

\appendix
\newpage

\section{Upper Bound Details}

\subsection{Proof of Proposition~\ref{prop:algebra}}
\label{app:proof-prop-algebra}

\begin{proposition-repeat}[\ref{prop:algebra}]
For any $h\geq 0$,
\begin{align}
    \frac{1}{1 - \exp(-(1+h)^{-1})} \leq h + \frac{1}{1-1/e} \leq h + 1.582
\end{align}
\end{proposition-repeat}
\begin{proof}
    Let $g(h) = \frac{1}{1 - \exp(-(1+h)^{-1})}$ for $h\geq 0$. Then,

    \begin{align}
        g'(h) &= \frac{\exp(-1/(1+h))}{(1-\exp(-1/(1+h)))^2} \frac{1}{(1+h)^2} \\
        &= \frac{\left(\frac{1}{2(1+h)}\right)^2}{\sinh^2 \left(\frac{1}{2(1+h)}\right)} \\
        &\leq 1
    \end{align}

    The last line follows since for $x > 0$, $\sinh(x) \geq x$\footnote{Note that $\sinh(0)=0$ and $\sinh'(x) = \cosh(x) \geq 1$}.
    
    Thus, $g(h) - h$ is non-increasing and so

    \begin{align}
        g(h) \leq h + g(0) = h+\frac{1}{1-1/e}
    \end{align}
\end{proof}

\subsection{Proof of Proposition~\ref{prop:gamma}}
\label{app:proof-prop-gamma}

\begin{proposition-repeat}[\ref{prop:gamma}]
For any $n \in \mathbb{N}$ and $h > 0$, there exists a problem with $n$ vectors, $h_\text{max} = h$ and $d=n$ dimensions, such that the submodularity ratio is  $\gamma(F_\text{reciprocal}) \leq \frac{1}{(1+h)-\frac{h}{n}}$
\end{proposition-repeat}
\begin{proof}
    Let $\Lambda = \frac{1}{h} I_n$ and $v_i = e_i$ for all $i\in [n]$.

    Note that $\frac{1}{\frac{1}{h}+1} = \frac{h}{1+h}$

    \begin{align}
        F_\text{reciprocal}(\emptyset) &= \frac{1}{nh} \\
        F_\text{reciprocal}(\{i\}) &= \frac{1}{(n-1)h + \frac{h}{1+h}} \\
        F_\text{reciprocal}([n]) &= \frac{1+h}{nh} \\        
    \end{align}

Taking the differences,
    \begin{align}
        F_\text{reciprocal}([n]) - F_\text{reciprocal}(\emptyset) &= \frac{1}{n} \\
        \sum_{i=1}^n F_\text{reciprocal}(\{i\}) - F_\text{reciprocal}(\emptyset) &= n \left( \frac{1}{(n-1)h + \frac{h}{1+h}} - \frac{1}{nh} \right) \\
        &= \frac{1}{h - \frac{h^2}{n(1+h)}} - \frac{1}{h} \\
        &= \frac{\frac{h^2}{n(1+h)}}{h^2 - \frac{h^3}{n(1+h)}} \\
        &= \frac{1}{n(1+h) - h}
    \end{align}
    
    Taking the ratio, 
    
    \begin{align}
        \gamma(F_\text{reciprocal}) &\leq \frac{\sum_{i=1}^n F_\text{reciprocal}(\{i\}) - F_\text{reciprocal}(\emptyset)}{F_\text{reciprocal}([n]) - F_\text{reciprocal}(\emptyset)} \\
        &= \frac{n}{n(1+h) - h} \\
        &= \frac{1}{(1+h) - \frac{h}{n}}
    \end{align}
\end{proof}

\subsection{Proof of Proposition~\ref{prop:alpha}}
\label{app:proof-prop-alpha}

\begin{proposition-repeat}[\ref{prop:alpha}]
For any $n \in \mathbb{N}$ and $h > 0$, there exists a problem with $n$ vectors, $h_\text{max} = h$, and $d=2$ where the curvature of $F_\text{reciprocal}$ is greater than $1 - \left(\frac{1}{1+\frac{(n-1)h}{2}}\right)^2$
\end{proposition-repeat}
\begin{proof}
    For convenience, define $m = n-1$. Let $\Lambda =  \frac{1}{h} e_1 e_1^\top + \left(\frac{1}{h} + m\right) e_2 e_2^\top$. Define $v_i = e_1$ for $1 \leq i \leq m$ and $v_{m+1} = e_2$ and . Then,

    \begin{align}
        f_0 :=f(\emptyset) &= \frac{1}{\frac{1}{h} + m} + \frac{1}{\frac{1}{h}} = \frac{h}{1+mh}(2+mh)  \\
        f_1 :=f(\{m+1\}) &= \frac{1}{\frac{1}{h} + m + 1} + \frac{1}{\frac{1}{h}} = \frac{h}{1+mh+h}(2+mh +h) \\
        f_m := f([m]) &= \frac{1}{\frac{1}{h} + m} + \frac{1}{\frac{1}{h} + m} = \frac{h}{1+mh} \cdot 2 \\
        f_{m+1} := f([m+1]) &= \frac{1}{\frac{1}{h} + m + 1} + \frac{1}{\frac{1}{h} + m} = \frac{h}{1+mh+h} \left(2 + \frac{h}{1+mh}\right)\\
    \end{align}

    The curvature is at least

    \begin{align}
        \alpha &\geq 1 - \frac{F_\text{reciprocal}(\{m+1\}) -F_\text{reciprocal}(\emptyset) }{F_\text{reciprocal}([m+1])  - F_\text{reciprocal}([m])} \\
        &= 1 - \frac{1/f_1 - 1/f_0}{1/f_{m+1} - 1/f_m} \\
        &= 1 - \frac{f_0 - f_1}{f_m - f_{m+1}} \frac{f_m f_{m+1}}{f_0 f_1}
    \end{align}

    Because $f_0 - f_1 = \frac{1}{\frac{1}{h} + m} - \frac{1}{\frac{1}{h} + m + 1} = f_n - f_{n+1}$,

    \begin{align}
        \alpha &\geq 1 - \frac{f_n f_{n+1}}{f_0 f_1} \\
        &= 1 - \frac{2 \left(2 + \frac{h}{1+mh}\right)}{(2+mh)(2+mh +h)} \\
        &= 1 - \frac{2 (2 + 2mh + h)}{(1+mh)(2+mh)(2+mh +h)}
    \end{align}

    Note that $\frac{2 + 2mh + x}{2 + mh + x}$ is decreasing for $x\geq 0$. Thus,

    \begin{align}
        \alpha &\geq 1 - \frac{2 (2 + 2mh)}{(1+mh)(2+mh)(2+mh)} \\
        &= 1 - \frac{2^2}{(2+mh)^2} \\
        &= 1 - \left(\frac{1}{1+\frac{(n-1)h}{2}}\right)^2
    \end{align}
\end{proof}

\section{Lower Bound Details}
\label{app:lower-bound}

For the lower bound, we prove a slightly more general result that yields Theorem~\ref{thm:lower} as a corollary.

\begin{lemma}
\label{lem:lower}
    For any $h>0$, $\alpha>0$, $0 < r < 1$, and $d \geq 4$, if there exists an order $d$ Hadamard matrix and
    \begin{align}
        g(d,r,\alpha) = d \ln(1/r) (1+\alpha) - \alpha \ln \left(\frac{(1-r)\alpha^2}{(r^d(1+\alpha)^2 - 1) \ln(1/r)} \right) + \alpha - (\alpha + 2) \frac{\ln(1/r)}{1-r} > 0,
    \end{align}
    then there exists a $d$-dimensional problem with $n=2d$ vectors such that $\mils = \max(h,\alpha)$ and for a cardinality constraint of $k=d$,

    \begin{align}
        \frac{f(S_\text{greedy})}{f(S^\star)} \geq \frac{1+h}{1 + \alpha}
    \end{align}
\end{lemma}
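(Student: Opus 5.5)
The plan is to generalize the explicit $d=4$ instance displayed after Theorem~\ref{thm:lower}. On this instance greedy provably takes $d$ ``axis-aligned'' vectors, while the $d$ ``Hadamard'' vectors together give a uniformly larger information matrix.

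\textbf{Construction.} Let $H\in\{\pm1\}^{d\times d}$ be a Hadamard matrix, so $H^\top H = dI$, and let $h_i$ denote its $i$-th row. Set $\Lambda=\mathrm{diag}(\lambda_1,\dots,\lambda_d)$ with $\lambda_j = r^{-(j-1)}$; in the $d=4$ example $r=e^{-1/4}$. Take two families of $d$ vectors each:
\begin{align*}
u_j = \sqrt{\alpha\lambda_j}\,e_j, \qquad w_i = \sqrt{h/d}\,\Lambda^{1/2} h_i .
\end{align*}
Their leverage scores are $u_j^\top\Lambda^{-1}u_j=\alpha$ and $w_i^\top\Lambda^{-1}w_i = (h/d)\|h_i\|^2 = h$, so $\mils=\max(h,\alpha)$.

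\textbf{The two candidate sets.} First, $\sum_i w_iw_i^\top = (h/d)\Lambda^{1/2}H^\top H\Lambda^{1/2} = h\Lambda$, so choosing all the $w_i$ gives $f=\tr(\Lambda^{-1})/(1+h)$, and hence $f(S^\star)\le \tr(\Lambda^{-1})/(1+h)$. Second, choosing all the $u_j$ gives $\Lambda+\alpha\Lambda$, so $f=\tr(\Lambda^{-1})/(1+\alpha)$. It therefore suffices to show that greedy, under any tie-breaking, selects exactly $\{u_1,\dots,u_d\}$. The ratio is then $(1+h)/(1+\alpha)$.

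\textbf{Greedy trajectory, by induction.} Suppose greedy has so far chosen exactly $u_1,\dots,u_t$. The current matrix is $A=\mathrm{diag}(c_j^{-1}\lambda_j)$ with $c_j=1/(1+\alpha)$ for $j\le t$ and $c_j=1$ otherwise.
\begin{itemize}
\item Adding $u_j$ for $j>t$ reduces $f$ by $\frac{\alpha}{1+\alpha}\lambda_j^{-1}$. This is largest at $j=t+1$, because $\lambda_j^{-1}=r^{j-1}$ is decreasing in $j$.
\item Adding $w_i$ reduces $f$, by Sherman--Morrison, by
\begin{align*}
\frac{w_i^\top A^{-2}w_i}{1+w_i^\top A^{-1}w_i}=\frac{(h/d)\,S_2}{1+(h/d)\,S_1},\qquad S_1=\sum_j c_j,\quad S_2=\sum_j c_j^2\lambda_j^{-1}.
\end{align*}
Since $h_{ij}^2=1$, this value is the same for every $i$. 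As a function of $h$ it is increasing and bounded by $S_2/S_1$, uniformly in $h$.
\end{itemize}
So greedy strictly prefers $u_{t+1}$ whenever, for every $0\le t\le d-1$,
\begin{align*}
\frac{S_2(t)}{S_1(t)} < \frac{\alpha}{1+\alpha}\,r^{t}.
\end{align*}
Strictness handles tie-breaking, and because the bound is uniform in $h$ the result holds for every $h>0$.

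\textbf{Main obstacle.} The work lies in turning this family of $d$ inequalities into the single condition $g(d,r,\alpha)>0$. Using geometric sums,
\begin{align*}
S_1 &= \frac{t}{1+\alpha}+(d-t),\\
S_2 &= \frac{1}{(1+\alpha)^2}\cdot\frac{1-r^t}{1-r}+\frac{r^t-r^d}{1-r}.
\end{align*}
The binding cases are $t$ close to $d$, where the right side $r^t$ is tiny but $S_2$ still contains the early, heavily down-weighted terms of size $(1+\alpha)^{-2}$. I would do three things:
\begin{enumerate}
\item Multiply through and take logarithms.
\item Treat $t$ as a continuous variable and minimize the resulting slack over $t\in[0,d]$. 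The minimizer comes from setting a derivative in $r^t$ to zero, which I expect produces the term $\ln\!\big(\tfrac{(1-r)\alpha^2}{(r^d(1+\alpha)^2-1)\ln(1/r)}\big)$.
\item Bound the leftover terms by $(\alpha+2)\ln(1/r)/(1-r)$ and $\alpha$.
\end{enumerate}
Checking that this minimized slack is bounded below by $g$, so that $g>0$ implies every inequality holds, is the delicate calculus step. I would first verify it numerically on the $d=4$, $\alpha=4$ instance to make sure the signs are right.
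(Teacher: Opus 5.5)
Your construction, the two candidate sets, and the reduction of greedy's behavior to the $d$ inequalities $S_2(t)/S_1(t) < \frac{\alpha}{1+\alpha}r^t$ for $0\le t\le d-1$ are correct and coincide with the paper's argument. Using rows rather than columns of $H$ is immaterial, since $H^\top H=HH^\top=dI$, and dropping the $1$ from the Sherman--Morrison denominator is exactly the paper's bound.

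The gap is the final step, which is the whole content of the hypothesis. You never show that $g(d,r,\alpha)>0$ implies these inequalities. The route you sketch (take logarithms, then bound leftover terms) is also not how $g$ arises; nothing needs to be bounded. Your heuristic that the binding cases are $t$ near $d$, where $r^t$ is tiny, is off as well. The expression $g$ is only defined when $r^d(1+\alpha)^2>1$, so $r^t\ge r^d>(1+\alpha)^{-2}$ throughout. In the $d=4$, $\alpha=4$, $r=e^{-1/4}$ instance, the tightest case is $t=2$.

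The missing idea is a specific positive rescaling. Write
\begin{align*}
S_2=\frac{1-(1+\alpha)^2r^d+(\alpha^2+2\alpha)r^t}{(1+\alpha)^2(1-r)},\qquad S_1=\frac{d(1+\alpha)-t\alpha}{1+\alpha},
\end{align*}
and multiply the inequality by the positive quantity $\frac{1+\alpha}{\alpha}\ln(1/r)\,\bigl(d(1+\alpha)-t\alpha\bigr)\,r^{-t}$. This turns the $r^t$ term into a constant and the constant into a multiple of $r^{-t}$, so the condition becomes $G(t)>0$ with
\begin{align*}
G(t)=\ln(1/r)\bigl(d(1+\alpha)-t\alpha\bigr)+\frac{\ln(1/r)}{\alpha(1-r)}\bigl(r^d(1+\alpha)^2-1\bigr)r^{-t}-(\alpha+2)\frac{\ln(1/r)}{1-r}.
\end{align*}
Since $r^d(1+\alpha)^2>1$, $G$ is linear plus a positive multiple of $e^{t\ln(1/r)}$, hence convex on $\mathbb{R}$. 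Its unique stationary point $t^*$ satisfies
\begin{align*}
\frac{\ln(1/r)}{\alpha(1-r)}\bigl(r^d(1+\alpha)^2-1\bigr)r^{-t^*}=\alpha,\qquad t^*=\frac{1}{\ln(1/r)}\ln\frac{(1-r)\alpha^2}{\bigl(r^d(1+\alpha)^2-1\bigr)\ln(1/r)}.
\end{align*}
Substituting gives $\min_{t\in\mathbb{R}}G(t)=G(t^*)=g(d,r,\alpha)$ exactly:
\begin{itemize}
\item the term $-\alpha\ln(\cdot)$ in $g$ is $-\alpha\ln(1/r)\,t^*$;
\item the $+\alpha$ is the exponential term evaluated at $t^*$;
\item $(\alpha+2)\ln(1/r)/(1-r)$ is an exact constant.
\end{itemize}
Thus $g>0$ gives $G(t)>0$ at every integer $t\in\{0,\dots,d-1\}$, which is what your induction needs. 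Without this convexity-plus-exact-minimization step, your argument establishes the conclusion only under the unverified family of $d$ inequalities, not under the stated hypothesis $g>0$.
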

\begin{proof}
    Let $H$ be an order $d$ Hadamard matrix, so all entries are $\pm 1$ and $H H^T = d I_d$.
    
    Let $\Lambda \in \mathbb{R}^{d\times d}$ be a diagonal matrix with diagonal entries $\Lambda_{j,j} = r^{-(j-1)}$.

    We define two sets of vectors, $U = \{u_i\}_{i=1}^d \subset \mathbb{R}^d$ and $W = \{w_i\}_{i=1}^d \subset \mathbb{R}^d$ as follows:
    \begin{align}
        u_{i,j} &= \mathbf{1}[i=j] \sqrt{\alpha r^{-(j-1)}} \\
        w_{i,j} &= H_{j,i} \sqrt{\frac{h r^{-(j-1)}}{d}} \\
    \end{align}
    Let the candidate vectors $V=\{v_i\}_{i=1}^{2d}$ be such that $v_i = u_i$ for $i\in [d]$ and $v_{d+i}=w_i$ for $i \in [d]$.

    Note that,

    \begin{align}
        u_i^\top \Lambda^{-1} u_i &= \sum_{j=1}^d v_{i,j}^2 \Lambda_{j,j}^{-1} = \sum_{j=1}^d \mathbf{1}[i=j] \frac{\alpha r^{-(j-1)}}{r^{-(j-1)}} = \alpha \\
        w_i^\top \Lambda^{-1} w_i &= \sum_{j=1}^d (w_{i,j})^2 \Lambda_{j,j}^{-1} = \sum_{j=1}^d \frac{h r^{-(j-1)}}{d r^{-(j-1)}} = h \\
    \end{align}

    Thus, $h_\text{max} = \max(h,\alpha)$.

    We will later show that the greedy algorithm selects the first $n$ vectors.

    \begin{align}
        f([d]) &= \text{Tr}( (\Lambda + \sum_{i=1}^d u_i u_i^\top)^{-1}) \\
        &= \sum_{j=1}^d (\Lambda_{j,j} + \alpha \Lambda_{j,j})^{-1} \\
    &= (1+\alpha)^{-1} \sum_{j=1}^d r^{j-1}
    \end{align}

    The optimal set of $k=d$ vectors will have $f$ value no larger than $f(\{i\}_{i=d+1}^{2d})$. First, note that

    \begin{align}
         \left[\sum_{i=1}^d w_i w_i^\top\right]_{j,k} &= \sum_{i=1}^d w_{i,j} w_{i,k} \\
         &= \frac{h}{d} \sqrt{r^{-(j-1)}} \sqrt{r^{-(k-1)}} \sum_{i=1}^d  H_{j,i} H_{k,i}  \\
        &= \frac{h}{d} \sqrt{r^{-(j-1)}} \sqrt{r^{-(k-1)}} \left[HH^\top\right]_{j,k} \\
        &= \frac{h}{d} \sqrt{r^{-(j-1)}} \sqrt{r^{-(k-1)}} \left[d I_d\right]_{j,k} \\
        &= \mathbf{1}[j=k] h r^{-(j-1)}
    \end{align}

    Thus, $\sum_{i=1}^d w_i w_i^\top = h \Lambda$
    
    \begin{align}
        f(\{i\}_{i=d+1}^{2d}) &= \text{Tr}( (\Lambda + \sum_{i=1}^d w_i w_i^\top)^{-1}) \\
        &= \text{Tr}( (\Lambda + h \Lambda)^{-1}) \\
        &= (1+h)^{-1} \sum_{j=1}^d r^{j-1}
    \end{align}

    Thus, 

    \begin{align}
        \frac{f([d])}{f(\{i\}_{i=d+1}^{2d})} = \frac{1+h}{1+\alpha}
    \end{align}

    If we can show that the greedy algorithm chooses $[d]$ (the first $k=d$ vectors), then it suffices to prove the result.

    Note that minimizing $f(S_{t-1} \cup \{i_t\})$ is the same as maximizing $f(S_{t-1}) - f(S_{t-1} \cup \{i_t\})$

    Note that for any $k$, if $k < j \leq d$, then from the Sherman-Morrison formula,

    \begin{align}
        f([k]) - f([k] \cup \{j\}) &= \frac{u_j^\top \left(\Lambda + \sum_{i=1}^k u_i u_i^\top\right)^{-2} u_j}{1 + u_j^\top \left(\Lambda + \sum_{i=1}^k u_i u_i^\top\right)^{-1} u_j} \\
        &= \frac{ \sqrt{\alpha r^{-(j-1)}} (r^{-(j-1)})^{-2} \sqrt{\alpha r^{-(j-1)}}}{1 + \sqrt{\alpha r^{-(j-1)}} (r^{-(j-1)})^{-1} \sqrt{\alpha r^{-(j-1)}}} \\
        &= \frac{\alpha r^{j-1}}{(1+\alpha)}
    \end{align}

    which are strictly decreasing in $j$ since $r<1$. 
    
    Thus, as long as $f([k]) - f([k+1]) > f([k]) - f([k] \cup \{d+i\})$ for all $0 \leq k<d$ and $i \in [d]$, then the greedy algorithm will iteratively select sets $[1], [2], [3], \dots, [d]$ and the theorem is proven.

    Note that for $0 \leq k<d$ and $i \in [d]$,

    \begin{align}
        f([k]) - f([k] \cup \{d+i\}) &= \frac{w_i^\top \left[ \Lambda + \sum_{j=1}^k  u_j u_j^\top \right]^{-2} w_i}{1 + w_i^\top \left[ \Lambda + \sum_{j=1}^k  u_j u_j^\top \right]^{-1} w_i} \\
        &\leq \frac{w_i^\top \left[ \Lambda + \sum_{j=1}^k  u_j u_j^\top \right]^{-2} w_i}{w_i^\top \left[ \Lambda + \sum_{j=1}^k  u_j u_j^\top \right]^{-1} w_i} \\
        &= \frac{\sum_{j=1}^k (1+\alpha)^{-2} (r^{-(j-1)})^{-2} w_{i,j}^2 + \sum_{j=k+1}^d (r^{-(j-1)})^{-2} w_{i,j}^2}{\sum_{j=1}^k (1+\alpha)^{-1} (r^{-(j-1)})^{-1} w_{i,j}^2 + \sum_{j=k+1}^d (r^{-(j-1)})^{-1} w_{i,j}^2} \\
        &= \frac{\sum_{j=1}^k (1+\alpha)^{-2} r^{j-1} \frac{h}{d} + \sum_{j=k+1}^d r^{j-1} \frac{h}{d}}{\sum_{j=1}^k (1+\alpha)^{-1} \frac{h}{d} + \sum_{j=k+1}^d \frac{h}{d}} \\
        &= \frac{1}{1+\alpha} \frac{\sum_{j=1}^k r^{j-1} + (1+\alpha)^2 \sum_{j=k+1}^d r^{j-1}}{\sum_{j=1}^k 1 + \sum_{j=k+1}^d (1+\alpha)} \\
        &= \frac{1}{1+\alpha} \frac{\frac{1-r^k}{1-r} + (1+\alpha)^2 \frac{r^k-r^d}{1-r}}{k + (d-k) (1+\alpha)} \\
        &= \frac{1}{1+\alpha} \frac{1}{1-r} \frac{1 - r^d (1+\alpha)^2 + r^k (\alpha^2 + 2 \alpha)}{d(1+\alpha) - k\alpha}
    \end{align}

The condition that $f([k]) - f([k+1]) > f([k]) - f([k] \cup \{d+i\})$ is equivalent to $\left[f([k]) - f([k+1])\right] - \left[f([k]) - f([k] \cup \{d+i\})\right]$ being positive. We multiply this quantity by positive constants to get $G(k,d,r,\alpha)$ which is positive if and only if $f([k]) - f([k+1]) > f([k]) - f([k] \cup \{d+i\})$.

\begin{align}
    &G(d,\alpha,r,k)  \\
    &:= \frac{1+\alpha}{\alpha} \ln(1/r) (d(1+\alpha) - k\alpha) r^{-k} \left[f([k]) - f([k+1])\right] - \left[f([k]) - f([k] \cup \{d+i\})\right] \\
    &= \ln(1/r) (d(1+\alpha) - k\alpha) + \frac{1}{\alpha} \frac{\ln(1/r)}{1-r} \left(r^d(1+\alpha)^2 - 1\right) r^{-k} - (\alpha + 2) \frac{\ln(1/r)}{1-r}
\end{align}

For $r^d (1+\alpha)^2 \geq 1$, this function is convex in $k$. Thus, it will be positive for all $k$ if its positive for the minimizer $k^*$. Taking the derivative and setting it to $0$, we get $k^* = \frac{1}{\ln(1/r)} \ln\left(\frac{(1-r)\alpha^2}{(r^d(1+\alpha)^2 - 1) \ln(1/r)} \right)$. Thus the minimal value is at least,

\begin{align}
    g(d,\alpha,r) &:= \min_k G(d,\alpha,r,k) \\
    &= d \ln(1/r) (1+\alpha) - \alpha \ln \left(\frac{(1-r)\alpha^2}{(r^d(1+\alpha)^2 - 1) \ln(1/r)} \right) + \alpha - (\alpha + 2) \frac{\ln(1/r)}{1-r} \\ 
\end{align}

Putting it all together, if $g(d,\alpha,r)$ is positive, then $G(d,\alpha,r,k)$ is positive for all $k$, so $\left[f([k]) - f([k+1])\right] - \left[f([k]) - f([k] \cup \{d+i\})\right]$ is positive for all $k$, and thus the greedy algorithm will sequentially select the elements.
\end{proof}

We ran a Python Notebook to get some numerical results on $g$ using some example values of $d$, $\alpha$, and $r$:

%lines 391 - 411

    \begin{tcolorbox}[breakable, size=fbox, boxrule=1pt, pad at break*=1mm,colback=cellbackground, colframe=cellborder]
\prompt{In}{incolor}{1}{\boxspacing}
\begin{Verbatim}[commandchars=\\\{\}]
\PY{k+kn}{import} \PY{n+nn}{math}

\PY{k}{def} \PY{n+nf}{g}\PY{p}{(}\PY{n}{d}\PY{p}{,} \PY{n}{alpha}\PY{p}{,} \PY{n}{r}\PY{p}{)}\PY{p}{:}
    \PY{n}{subexpression} \PY{o}{=} \PY{p}{(}\PY{l+m+mi}{1}\PY{o}{\PYZhy{}}\PY{n}{r}\PY{p}{)} \PY{o}{*} \PY{n}{alpha}\PY{o}{*}\PY{o}{*}\PY{l+m+mi}{2} \PY{o}{/} \PY{p}{(}\PY{n}{r}\PY{o}{*}\PY{o}{*}\PY{n}{d} \PY{o}{*} \PY{p}{(}\PY{l+m+mi}{1}\PY{o}{+}\PY{n}{alpha}\PY{p}{)}\PY{o}{*}\PY{o}{*}\PY{l+m+mi}{2} \PY{o}{\PYZhy{}} \PY{l+m+mi}{1}\PY{p}{)} \PY{o}{/} \PY{n}{math}\PY{o}{.}\PY{n}{log}\PY{p}{(}\PY{l+m+mi}{1}\PY{o}{/}\PY{n}{r}\PY{p}{)}
    \PY{k}{return} \PY{n}{d}\PY{o}{*}\PY{n}{math}\PY{o}{.}\PY{n}{log}\PY{p}{(}\PY{l+m+mi}{1}\PY{o}{/}\PY{n}{r}\PY{p}{)}\PY{o}{*}\PY{p}{(}\PY{l+m+mi}{1}\PY{o}{+}\PY{n}{alpha}\PY{p}{)} \PY{o}{\PYZhy{}} \PY{n}{alpha}\PY{o}{*}\PY{n}{math}\PY{o}{.}\PY{n}{log}\PY{p}{(}\PY{n}{subexpression}\PY{p}{)} \PY{o}{+} \PY{n}{alpha} \PY{o}{\PYZhy{}} \PY{p}{(}\PY{n}{alpha}\PY{o}{+}\PY{l+m+mi}{2}\PY{p}{)}\PY{o}{*}\PY{n}{math}\PY{o}{.}\PY{n}{log}\PY{p}{(}\PY{l+m+mi}{1}\PY{o}{/}\PY{n}{r}\PY{p}{)}\PY{o}{/}\PY{p}{(}\PY{l+m+mi}{1}\PY{o}{\PYZhy{}}\PY{n}{r}\PY{p}{)}

\PY{k}{for} \PY{n}{d}\PY{p}{,} \PY{n}{alpha}\PY{p}{,} \PY{n}{r} \PY{o+ow}{in} \PY{p}{[}\PY{p}{(}\PY{l+m+mi}{4}\PY{p}{,}\PY{l+m+mi}{4}\PY{p}{,}\PY{n}{math}\PY{o}{.}\PY{n}{exp}\PY{p}{(}\PY{o}{\PYZhy{}}\PY{l+m+mi}{1}\PY{o}{/}\PY{l+m+mi}{4}\PY{p}{)}\PY{p}{)}\PY{p}{,}\PY{p}{(}\PY{l+m+mi}{4}\PY{p}{,}\PY{l+m+mf}{3.7}\PY{p}{,}\PY{l+m+mf}{0.75}\PY{p}{)}\PY{p}{,} \PY{p}{(}\PY{l+m+mi}{8}\PY{p}{,}\PY{l+m+mf}{2.8}\PY{p}{,}\PY{l+m+mf}{0.85}\PY{p}{)}\PY{p}{,} \PY{p}{(}\PY{l+m+mi}{16}\PY{p}{,}\PY{l+m+mf}{2.5}\PY{p}{,}\PY{l+m+mf}{0.93}\PY{p}{)}\PY{p}{,} \PY{p}{(}\PY{l+m+mi}{256}\PY{p}{,}\PY{l+m+mf}{2.25}\PY{p}{,}\PY{l+m+mf}{0.9955}\PY{p}{)}\PY{p}{]}\PY{p}{:}
    \PY{n+nb}{print}\PY{p}{(}\PY{l+s+sa}{f}\PY{l+s+s2}{\PYZdq{}}\PY{l+s+s2}{d=}\PY{l+s+si}{\PYZob{}}\PY{n}{d}\PY{l+s+si}{\PYZcb{}}\PY{l+s+s2}{,}\PY{l+s+se}{\PYZbs{}t}\PY{l+s+s2}{alpha=}\PY{l+s+si}{\PYZob{}}\PY{n}{alpha}\PY{l+s+si}{\PYZcb{}}\PY{l+s+s2}{,}\PY{l+s+se}{\PYZbs{}t}\PY{l+s+s2}{r=}\PY{l+s+si}{\PYZob{}}\PY{n+nb}{round}\PY{p}{(}\PY{n}{r}\PY{p}{,}\PY{l+m+mi}{6}\PY{p}{)}\PY{l+s+si}{\PYZcb{}}\PY{l+s+s2}{, }\PY{l+s+se}{\PYZbs{}t}\PY{l+s+s2}{g=}\PY{l+s+si}{\PYZob{}}\PY{n+nb}{round}\PY{p}{(}\PY{n}{g}\PY{p}{(}\PY{n}{d}\PY{p}{,}\PY{n}{alpha}\PY{p}{,}\PY{n}{r}\PY{p}{)}\PY{p}{,}\PY{l+m+mi}{6}\PY{p}{)}\PY{l+s+si}{\PYZcb{}}\PY{l+s+s2}{\PYZdq{}}\PY{p}{)}
\end{Verbatim}
\end{tcolorbox}

    \begin{Verbatim}[commandchars=\\\{\}]
d=4,    alpha=4,        r=0.778801,     g=0.033082
d=4,    alpha=3.7,      r=0.75,         g=0.010047
d=8,    alpha=2.8,      r=0.85,         g=0.013102
d=16,   alpha=2.5,      r=0.93,         g=0.013278
d=256,  alpha=2.25,     r=0.9955,       g=0.001105
    \end{Verbatim}

The first setting of values is used to prove Theorem~\ref{thm:lower} below. The other settings show that we can get tighter approximations (i.e. smaller $\alpha$) for larger $d$. Separately, we found that even for arbitrarily large $d$ and optimally chosen $r$, the expression can only be positive if $\alpha \geq 2.23$, so $d=256$ nearly achieves the lowest value of $\alpha$ (and thus approximation ratio). We now prove the main result,

\begin{theorem-repeat}[\ref{thm:lower}]
    For any $h>0$ and $d \geq 4$, if there exists an order $d$ Hadamard matrix, then there exists a $d$-dimensional problem with $n=2d$ vectors such that $\mils = \max(h,4)$ and for a cardinality constraint of $k=d$,

    \begin{align}
        \frac{f(S_\text{greedy})}{f(S^\star)} \geq \frac{1+h}{5}
    \end{align}
\end{theorem-repeat}
\begin{proof}
    Setting $\alpha=4$ and $r=\exp(-1/d)$, it suffices to prove that $g(d,4,\exp(-1/d)) > 0$ for all $d \geq 4$.
\begin{align}
    g(d,\exp(-1/d),4) = 5 + 4 \ln\left( 25 \exp(-1) - 1\right)-4\ln(4^2) \\-4\ln(d(1-\exp(-1/d))+4-6 \frac{1}{d(1 - \exp(-1/d))}
\end{align}

We now show that $g$ is increasing in $d$.

Let $u(d) = d(1-\exp(-1/d))$. Noting that $\exp(x) \leq 1 + x + \frac{x^2}{2}$ for $x \leq 0$,

\begin{align}
    u'(d) &= (1 - \exp(-1/d)) - d \exp(-1/d) \frac{1}{d} \\
    &= 1 - \left(1+\frac{1}{d}\right)\exp(-1/d) \\
    &\geq 1 - \left(1+\frac{1}{d}\right)\left(1-\frac{1}{d} + \frac{1}{2d^2}\right) \\
    &= \frac{1}{2d^2} \left(1 - \frac{1}{d}\right) > 0
\end{align} 

So $u$ is increasing in $d$. Furthermore, noting that $\exp(x) \geq 1+x$ for all $x$, we can show $u(d) \leq 1$,

\begin{align}
    u(d) &= d(1-\exp(-1/d)) \\
    &\leq d(1 - (1-1/d)) \\
    &= 1
\end{align}

Since $4\ln(1/u) - 6/u$ is increasing in $u$ for $u \in (0,1)$, $g(d,\exp(-1/d),4)$ is increasing in $d$.

Thus

\begin{align}
    g(d,\exp(-1/d),4) &\geq g(4,\exp(-1/4),4) \\
    &> 0.03
\end{align}

The last equation is from the numerical result.
\end{proof}

\section{Lower Bound Example Code}
\label{app:lower-bound-experiment}

Below is the code to numerically check the explicit example with $d=4$.

% lines 413 - 502

    \begin{tcolorbox}[breakable, size=fbox, boxrule=1pt, pad at break*=1mm,colback=cellbackground, colframe=cellborder]
\prompt{In}{incolor}{2}{\boxspacing}
\begin{Verbatim}[commandchars=\\\{\}]
\PY{k+kn}{import} \PY{n+nn}{numpy} \PY{k}{as} \PY{n+nn}{np}
\PY{k+kn}{import} \PY{n+nn}{matplotlib}\PY{n+nn}{.}\PY{n+nn}{pyplot} \PY{k}{as} \PY{n+nn}{plt}

\PY{n}{H} \PY{o}{=} \PY{n}{np}\PY{o}{.}\PY{n}{array}\PY{p}{(}\PY{p}{[}\PY{p}{[} \PY{l+m+mi}{1}\PY{p}{,}  \PY{l+m+mi}{1}\PY{p}{,}  \PY{l+m+mi}{1}\PY{p}{,}  \PY{l+m+mi}{1}\PY{p}{]}\PY{p}{,}
              \PY{p}{[} \PY{l+m+mi}{1}\PY{p}{,} \PY{o}{\PYZhy{}}\PY{l+m+mi}{1}\PY{p}{,}  \PY{l+m+mi}{1}\PY{p}{,} \PY{o}{\PYZhy{}}\PY{l+m+mi}{1}\PY{p}{]}\PY{p}{,}
              \PY{p}{[} \PY{l+m+mi}{1}\PY{p}{,}  \PY{l+m+mi}{1}\PY{p}{,} \PY{o}{\PYZhy{}}\PY{l+m+mi}{1}\PY{p}{,} \PY{o}{\PYZhy{}}\PY{l+m+mi}{1}\PY{p}{]}\PY{p}{,}
              \PY{p}{[} \PY{l+m+mi}{1}\PY{p}{,} \PY{o}{\PYZhy{}}\PY{l+m+mi}{1}\PY{p}{,} \PY{o}{\PYZhy{}}\PY{l+m+mi}{1}\PY{p}{,}  \PY{l+m+mi}{1}\PY{p}{]}\PY{p}{]}\PY{p}{)}

\PY{k}{class} \PY{n+nc}{Problem}\PY{p}{:}
    \PY{k}{def} \PY{n+nf}{mils}\PY{p}{(}\PY{n+nb+bp}{self}\PY{p}{)}\PY{p}{:}
        \PY{k}{return} \PY{n+nb}{max}\PY{p}{(}\PY{p}{[}\PY{n}{v} \PY{o}{@} \PY{n}{np}\PY{o}{.}\PY{n}{linalg}\PY{o}{.}\PY{n}{inv}\PY{p}{(}\PY{n+nb+bp}{self}\PY{o}{.}\PY{n}{Lam}\PY{p}{)} \PY{o}{@} \PY{n}{v} \PY{k}{for} \PY{n}{v} \PY{o+ow}{in} \PY{n+nb+bp}{self}\PY{o}{.}\PY{n}{vectors}\PY{p}{]}\PY{p}{)}
             
    \PY{k}{def} \PY{n+nf}{f}\PY{p}{(}\PY{n+nb+bp}{self}\PY{p}{,} \PY{n}{S}\PY{p}{)}\PY{p}{:}
        \PY{n}{cov} \PY{o}{=} \PY{n}{np}\PY{o}{.}\PY{n}{array}\PY{p}{(}\PY{n+nb+bp}{self}\PY{o}{.}\PY{n}{Lam}\PY{p}{)}
        \PY{k}{for} \PY{n}{i} \PY{o+ow}{in} \PY{n}{S}\PY{p}{:}
             \PY{n}{cov} \PY{o}{+}\PY{o}{=} \PY{n}{np}\PY{o}{.}\PY{n}{outer}\PY{p}{(}\PY{n+nb+bp}{self}\PY{o}{.}\PY{n}{vectors}\PY{p}{[}\PY{n}{i}\PY{p}{]}\PY{p}{,} \PY{n+nb+bp}{self}\PY{o}{.}\PY{n}{vectors}\PY{p}{[}\PY{n}{i}\PY{p}{]}\PY{p}{)}
        \PY{k}{return} \PY{n}{np}\PY{o}{.}\PY{n}{trace}\PY{p}{(}\PY{n}{np}\PY{o}{.}\PY{n}{linalg}\PY{o}{.}\PY{n}{inv}\PY{p}{(}\PY{n}{cov}\PY{p}{)}\PY{p}{)}
             
\PY{k}{class} \PY{n+nc}{ConstructedProblem}\PY{p}{(}\PY{n}{Problem}\PY{p}{)}\PY{p}{:}
    \PY{k}{def} \PY{n+nf+fm}{\PYZus{}\PYZus{}init\PYZus{}\PYZus{}}\PY{p}{(}\PY{n+nb+bp}{self}\PY{p}{,} \PY{n}{h}\PY{p}{)}\PY{p}{:}
        \PY{n+nb+bp}{self}\PY{o}{.}\PY{n}{d} \PY{o}{=} \PY{l+m+mi}{4}
        \PY{n+nb+bp}{self}\PY{o}{.}\PY{n}{n} \PY{o}{=} \PY{l+m+mi}{4}
        \PY{n+nb+bp}{self}\PY{o}{.}\PY{n}{Lam} \PY{o}{=} \PY{n}{np}\PY{o}{.}\PY{n}{diag}\PY{p}{(}\PY{p}{[}\PY{n}{math}\PY{o}{.}\PY{n}{exp}\PY{p}{(}\PY{n}{i}\PY{o}{/}\PY{l+m+mi}{4}\PY{p}{)} \PY{k}{for} \PY{n}{i} \PY{o+ow}{in} \PY{n+nb}{range}\PY{p}{(}\PY{l+m+mi}{4}\PY{p}{)}\PY{p}{]}\PY{p}{)}
           
        \PY{n+nb+bp}{self}\PY{o}{.}\PY{n}{vectors} \PY{o}{=} \PY{p}{[}\PY{p}{]}
        \PY{n}{scaling} \PY{o}{=} \PY{n}{np}\PY{o}{.}\PY{n}{diag}\PY{p}{(}\PY{p}{[}\PY{n}{math}\PY{o}{.}\PY{n}{exp}\PY{p}{(}\PY{n}{i}\PY{o}{/}\PY{l+m+mi}{8}\PY{p}{)} \PY{k}{for} \PY{n}{i} \PY{o+ow}{in} \PY{n+nb}{range}\PY{p}{(}\PY{l+m+mi}{4}\PY{p}{)}\PY{p}{]}\PY{p}{)} 
        \PY{k}{for} \PY{n}{i} \PY{o+ow}{in} \PY{n+nb}{range}\PY{p}{(}\PY{l+m+mi}{4}\PY{p}{)}\PY{p}{:}
            \PY{n+nb+bp}{self}\PY{o}{.}\PY{n}{vectors}\PY{o}{.}\PY{n}{append}\PY{p}{(}\PY{l+m+mi}{2} \PY{o}{*} \PY{n}{scaling} \PY{o}{@} \PY{n}{np}\PY{o}{.}\PY{n}{eye}\PY{p}{(}\PY{l+m+mi}{4}\PY{p}{)}\PY{p}{[}\PY{n}{i}\PY{p}{]}\PY{p}{)}
        \PY{k}{for} \PY{n}{i} \PY{o+ow}{in} \PY{n+nb}{range}\PY{p}{(}\PY{l+m+mi}{4}\PY{p}{)}\PY{p}{:}
            \PY{n+nb+bp}{self}\PY{o}{.}\PY{n}{vectors}\PY{o}{.}\PY{n}{append}\PY{p}{(}\PY{n}{math}\PY{o}{.}\PY{n}{sqrt}\PY{p}{(}\PY{n}{h}\PY{p}{)}\PY{o}{/}\PY{l+m+mi}{2} \PY{o}{*} \PY{n}{scaling} \PY{o}{@}  \PY{n}{H}\PY{p}{[}\PY{n}{i}\PY{p}{]}\PY{p}{)}
                
\PY{k}{def} \PY{n+nf}{Greedy}\PY{p}{(}\PY{n}{problem}\PY{p}{,}\PY{n}{k}\PY{p}{)}\PY{p}{:}
    \PY{n}{S} \PY{o}{=} \PY{p}{[}\PY{p}{]}
    \PY{k}{for} \PY{n}{t} \PY{o+ow}{in} \PY{n+nb}{range}\PY{p}{(}\PY{n}{k}\PY{p}{)}\PY{p}{:}
        \PY{n}{greedy\PYZus{}next} \PY{o}{=} \PY{n}{np}\PY{o}{.}\PY{n}{argmin}\PY{p}{(}\PY{p}{[}\PY{n}{problem}\PY{o}{.}\PY{n}{f}\PY{p}{(}\PY{n}{S}\PY{o}{+}\PY{p}{[}\PY{n}{i}\PY{p}{]}\PY{p}{)} \PY{k}{if} \PY{n}{i} \PY{o+ow}{not} \PY{o+ow}{in} \PY{n}{S} \PY{k}{else} \PY{n}{np}\PY{o}{.}\PY{n}{inf} \PY{k}{for} \PY{n}{i} \PY{o+ow}{in} \PY{n+nb}{range}\PY{p}{(}\PY{n}{problem}\PY{o}{.}\PY{n}{n}\PY{p}{)}\PY{p}{]}\PY{p}{)}
        \PY{n}{S}\PY{o}{.}\PY{n}{append}\PY{p}{(}\PY{n}{greedy\PYZus{}next}\PY{p}{)}
    \PY{k}{return} \PY{n}{S}
\end{Verbatim}
\end{tcolorbox}

    \begin{tcolorbox}[breakable, size=fbox, boxrule=1pt, pad at break*=1mm,colback=cellbackground, colframe=cellborder]
\prompt{In}{incolor}{3}{\boxspacing}
\begin{Verbatim}[commandchars=\\\{\}]
\PY{k}{for} \PY{n}{h} \PY{o+ow}{in} \PY{p}{[}\PY{l+m+mi}{10}\PY{p}{,}\PY{l+m+mi}{100}\PY{p}{,}\PY{l+m+mi}{1000}\PY{p}{]}\PY{p}{:}
    \PY{n}{prob} \PY{o}{=} \PY{n}{ConstructedProblem}\PY{p}{(}\PY{n}{h}\PY{p}{)}
    
    \PY{n}{S}\PY{o}{=}\PY{n}{Greedy}\PY{p}{(}\PY{n}{prob}\PY{p}{,}\PY{l+m+mi}{4}\PY{p}{)}
    \PY{n}{greedy\PYZus{}value} \PY{o}{=} \PY{n}{prob}\PY{o}{.}\PY{n}{f}\PY{p}{(}\PY{n}{S}\PY{p}{)}
    \PY{n}{better\PYZus{}value} \PY{o}{=} \PY{n}{prob}\PY{o}{.}\PY{n}{f}\PY{p}{(}\PY{p}{[}\PY{l+m+mi}{4}\PY{p}{,}\PY{l+m+mi}{5}\PY{p}{,}\PY{l+m+mi}{6}\PY{p}{,}\PY{l+m+mi}{7}\PY{p}{]}\PY{p}{)}

    \PY{n+nb}{print}\PY{p}{(}\PY{l+m+mi}{32}\PY{o}{*}\PY{l+s+s2}{\PYZdq{}}\PY{l+s+s2}{\PYZhy{}}\PY{l+s+s2}{\PYZdq{}}\PY{p}{)}
    \PY{n+nb}{print}\PY{p}{(}\PY{l+s+sa}{f}\PY{l+s+s2}{\PYZdq{}}\PY{l+s+s2}{h=}\PY{l+s+si}{\PYZob{}}\PY{n}{h}\PY{l+s+si}{\PYZcb{}}\PY{l+s+s2}{\PYZdq{}}\PY{p}{)}
    \PY{n+nb}{print}\PY{p}{(}\PY{l+s+sa}{f}\PY{l+s+s2}{\PYZdq{}}\PY{l+s+s2}{MILS=}\PY{l+s+si}{\PYZob{}}\PY{n}{prob}\PY{o}{.}\PY{n}{mils}\PY{p}{(}\PY{p}{)}\PY{l+s+si}{\PYZcb{}}\PY{l+s+s2}{\PYZdq{}}\PY{p}{)}
    \PY{n+nb}{print}\PY{p}{(}\PY{l+s+sa}{f}\PY{l+s+s2}{\PYZdq{}}\PY{l+s+s2}{Greedy Set: }\PY{l+s+si}{\PYZob{}}\PY{n}{S}\PY{l+s+si}{\PYZcb{}}\PY{l+s+s2}{\PYZdq{}}\PY{p}{)}
    \PY{n+nb}{print}\PY{p}{(}\PY{l+s+sa}{f}\PY{l+s+s2}{\PYZdq{}}\PY{l+s+s2}{Greedy Value: }\PY{l+s+si}{\PYZob{}}\PY{n}{greedy\PYZus{}value}\PY{l+s+si}{\PYZcb{}}\PY{l+s+s2}{\PYZdq{}}\PY{p}{)}
    \PY{n+nb}{print}\PY{p}{(}\PY{l+s+sa}{f}\PY{l+s+s2}{\PYZdq{}}\PY{l+s+s2}{Better Value: }\PY{l+s+si}{\PYZob{}}\PY{n}{better\PYZus{}value}\PY{l+s+si}{\PYZcb{}}\PY{l+s+s2}{\PYZdq{}}\PY{p}{)}
    \PY{n+nb}{print}\PY{p}{(}\PY{l+s+sa}{f}\PY{l+s+s2}{\PYZdq{}}\PY{l+s+s2}{Approx Ratio: }\PY{l+s+si}{\PYZob{}}\PY{n}{greedy\PYZus{}value}\PY{o}{/}\PY{n}{better\PYZus{}value}\PY{l+s+si}{\PYZcb{}}\PY{l+s+s2}{\PYZdq{}}\PY{p}{)}
    \PY{n+nb}{print}\PY{p}{(}\PY{p}{)}
\end{Verbatim}
\end{tcolorbox}

    \begin{Verbatim}[commandchars=\\\{\}]
--------------------------------
h=10
MILS=10.0
Greedy Set: [0, 1, 2, 3]
Greedy Value: 0.5715395991050106
Better Value: 0.25979072686591387
Approx Ratio: 2.2000000000000006

--------------------------------
h=100
MILS=100.0
Greedy Set: [0, 1, 2, 3]
Greedy Value: 0.5715395991050106
Better Value: 0.028294039559653993
Approx Ratio: 20.2

--------------------------------
h=1000
MILS=1000.0
Greedy Set: [0, 1, 2, 3]
Greedy Value: 0.5715395991050106
Better Value: 0.0028548431523726806
Approx Ratio: 200.2

    \end{Verbatim}

\section{Illustrative Example Code}
\label{app:illustrative-example}

% lines 504 - 591

    \begin{tcolorbox}[breakable, size=fbox, boxrule=1pt, pad at break*=1mm,colback=cellbackground, colframe=cellborder]
\prompt{In}{incolor}{4}{\boxspacing}
\begin{Verbatim}[commandchars=\\\{\}]
\PY{k}{class} \PY{n+nc}{RandomProblem}\PY{p}{(}\PY{n}{Problem}\PY{p}{)}\PY{p}{:}
    \PY{k}{def} \PY{n+nf+fm}{\PYZus{}\PYZus{}init\PYZus{}\PYZus{}}\PY{p}{(}\PY{n+nb+bp}{self}\PY{p}{,} \PY{n}{d}\PY{p}{,} \PY{n}{n}\PY{p}{)}\PY{p}{:}
        \PY{n+nb+bp}{self}\PY{o}{.}\PY{n}{d} \PY{o}{=} \PY{n}{d}
        \PY{n+nb+bp}{self}\PY{o}{.}\PY{n}{n} \PY{o}{=} \PY{n}{n}
        \PY{n+nb+bp}{self}\PY{o}{.}\PY{n}{Lam} \PY{o}{=} \PY{n}{np}\PY{o}{.}\PY{n}{eye}\PY{p}{(}\PY{n}{d}\PY{p}{)}
             
        \PY{n+nb+bp}{self}\PY{o}{.}\PY{n}{vectors} \PY{o}{=} \PY{p}{[}\PY{p}{]}
        \PY{n}{rng} \PY{o}{=} \PY{n}{np}\PY{o}{.}\PY{n}{random}\PY{o}{.}\PY{n}{default\PYZus{}rng}\PY{p}{(}\PY{n}{seed}\PY{o}{=}\PY{l+m+mi}{0}\PY{p}{)}
        \PY{k}{for} \PY{n}{i} \PY{o+ow}{in} \PY{n+nb}{range}\PY{p}{(}\PY{n}{n}\PY{p}{)}\PY{p}{:}
            \PY{n}{v} \PY{o}{=} \PY{n}{rng}\PY{o}{.}\PY{n}{normal}\PY{p}{(}\PY{n}{size}\PY{o}{=}\PY{p}{(}\PY{n}{d}\PY{p}{,}\PY{p}{)}\PY{p}{)}
            \PY{n}{v} \PY{o}{/}\PY{o}{=} \PY{n}{np}\PY{o}{.}\PY{n}{linalg}\PY{o}{.}\PY{n}{norm}\PY{p}{(}\PY{n}{v}\PY{p}{)}
            \PY{n+nb+bp}{self}\PY{o}{.}\PY{n}{vectors}\PY{o}{.}\PY{n}{append}\PY{p}{(}\PY{n}{v}\PY{p}{)}
            
\PY{k}{def} \PY{n+nf}{Chamon\PYZus{}alpha}\PY{p}{(}\PY{n}{a}\PY{p}{,}\PY{n}{b}\PY{p}{)}\PY{p}{:}
    \PY{k}{return} \PY{l+m+mi}{1}\PY{o}{/}\PY{p}{(}\PY{l+m+mi}{1}\PY{o}{+}\PY{n}{a}\PY{p}{)}

\PY{k}{def} \PY{n+nf}{Chamon\PYZus{}reduction\PYZus{}approx\PYZus{}ratio}\PY{p}{(}\PY{n}{k}\PY{p}{)}\PY{p}{:}
    \PY{k}{if} \PY{n}{k}\PY{o}{==}\PY{l+m+mi}{0}\PY{p}{:}
        \PY{k}{return} \PY{l+m+mi}{1}
    \PY{n}{product\PYZus{}terms} \PY{o}{=} \PY{p}{[}\PY{l+m+mi}{1} \PY{o}{\PYZhy{}} \PY{l+m+mi}{1}\PY{o}{/}\PY{n}{np}\PY{o}{.}\PY{n}{sum}\PY{p}{(}\PY{p}{[}\PY{l+m+mi}{1}\PY{o}{/}\PY{n}{Chamon\PYZus{}alpha}\PY{p}{(}\PY{n}{h}\PY{p}{,}\PY{n}{h}\PY{o}{+}\PY{n}{s}\PY{p}{)} \PY{k}{for} \PY{n}{s} \PY{o+ow}{in} \PY{n+nb}{range}\PY{p}{(}\PY{n}{k}\PY{p}{)}\PY{p}{]}\PY{p}{)} \PY{k}{for} \PY{n}{h} \PY{o+ow}{in} \PY{n+nb}{range}\PY{p}{(}\PY{n}{k}\PY{p}{)}\PY{p}{]}
    \PY{k}{return} \PY{l+m+mi}{1} \PY{o}{\PYZhy{}} \PY{n}{np}\PY{o}{.}\PY{n}{product}\PY{p}{(}\PY{n}{product\PYZus{}terms}\PY{p}{)}

\PY{k}{def} \PY{n+nf}{Bian\PYZus{}gamma\PYZus{}alpha}\PY{p}{(}\PY{n}{vectors}\PY{p}{)}\PY{p}{:}
    \PY{n}{X\PYZus{}norm\PYZus{}squared} \PY{o}{=} \PY{n}{np}\PY{o}{.}\PY{n}{linalg}\PY{o}{.}\PY{n}{norm}\PY{p}{(}\PY{n}{np}\PY{o}{.}\PY{n}{array}\PY{p}{(}\PY{n}{vectors}\PY{p}{)}\PY{p}{,}\PY{n+nb}{ord}\PY{o}{=}\PY{l+m+mi}{2}\PY{p}{)}\PY{o}{*}\PY{o}{*}\PY{l+m+mi}{2}
    \PY{k}{return} \PY{p}{\PYZob{}}\PY{l+s+s2}{\PYZdq{}}\PY{l+s+s2}{gamma}\PY{l+s+s2}{\PYZdq{}}\PY{p}{:} \PY{l+m+mi}{1}\PY{o}{/}\PY{n}{X\PYZus{}norm\PYZus{}squared}\PY{o}{/}\PY{p}{(}\PY{l+m+mi}{1}\PY{o}{+}\PY{n}{X\PYZus{}norm\PYZus{}squared}\PY{p}{)}\PY{p}{,} \PY{l+s+s2}{\PYZdq{}}\PY{l+s+s2}{alpha}\PY{l+s+s2}{\PYZdq{}}\PY{p}{:} \PY{l+m+mi}{1} \PY{o}{\PYZhy{}} \PY{l+m+mi}{1}\PY{o}{/}\PY{n}{X\PYZus{}norm\PYZus{}squared}\PY{o}{/}\PY{p}{(}\PY{l+m+mi}{1}\PY{o}{+}\PY{n}{X\PYZus{}norm\PYZus{}squared}\PY{p}{)}\PY{p}{\PYZcb{}}

\PY{k}{def} \PY{n+nf}{Bian\PYZus{}reduction\PYZus{}approx\PYZus{}ratio}\PY{p}{(}\PY{n}{alpha}\PY{p}{,}\PY{n}{gamma}\PY{p}{)}\PY{p}{:}
    \PY{k}{return} \PY{l+m+mi}{1}\PY{o}{/}\PY{n}{alpha} \PY{o}{*} \PY{p}{(}\PY{l+m+mi}{1} \PY{o}{\PYZhy{}} \PY{p}{(}\PY{p}{(}\PY{n}{k} \PY{o}{\PYZhy{}} \PY{n}{alpha}\PY{o}{*}\PY{n}{gamma}\PY{p}{)}\PY{o}{/}\PY{n}{k}\PY{p}{)}\PY{o}{*}\PY{o}{*}\PY{n}{k}\PY{p}{)}

\PY{k}{def} \PY{n+nf}{reduction\PYZus{}approx\PYZus{}ratio\PYZus{}lb}\PY{p}{(}\PY{n}{greedy\PYZus{}value}\PY{p}{,}\PY{n}{f\PYZus{}0}\PY{p}{,}\PY{n}{reduction\PYZus{}approx\PYZus{}ratio}\PY{p}{)}\PY{p}{:}
    \PY{k}{return} \PY{n}{f\PYZus{}0} \PY{o}{\PYZhy{}} \PY{p}{(}\PY{n}{f\PYZus{}0} \PY{o}{\PYZhy{}} \PY{n}{greedy\PYZus{}value}\PY{p}{)}\PY{o}{/}\PY{n}{reduction\PYZus{}approx\PYZus{}ratio}
\end{Verbatim}
\end{tcolorbox}

    \begin{tcolorbox}[breakable, size=fbox, boxrule=1pt, pad at break*=1mm,colback=cellbackground, colframe=cellborder]
\prompt{In}{incolor}{5}{\boxspacing}
\begin{Verbatim}[commandchars=\\\{\}]
\PY{n}{d}\PY{p}{,} \PY{n}{n}\PY{p}{,} \PY{n}{k} \PY{o}{=} \PY{l+m+mi}{20}\PY{p}{,} \PY{l+m+mi}{1000}\PY{p}{,} \PY{l+m+mi}{100}

\PY{n}{prob} \PY{o}{=} \PY{n}{RandomProblem}\PY{p}{(}\PY{n}{d}\PY{p}{,}\PY{n}{n}\PY{p}{)}
\PY{n}{f\PYZus{}0} \PY{o}{=} \PY{n}{prob}\PY{o}{.}\PY{n}{f}\PY{p}{(}\PY{p}{[}\PY{p}{]}\PY{p}{)}
\PY{n}{our\PYZus{}approx\PYZus{}ratio} \PY{o}{=} \PY{p}{(}\PY{n}{prob}\PY{o}{.}\PY{n}{mils}\PY{p}{(}\PY{p}{)} \PY{o}{+} \PY{l+m+mi}{1}\PY{o}{/}\PY{p}{(}\PY{l+m+mi}{1}\PY{o}{\PYZhy{}}\PY{l+m+mi}{1}\PY{o}{/}\PY{n}{math}\PY{o}{.}\PY{n}{e}\PY{p}{)}\PY{p}{)}

\PY{n}{S} \PY{o}{=} \PY{n}{Greedy}\PY{p}{(}\PY{n}{prob}\PY{p}{,}\PY{n}{k}\PY{p}{)}
\PY{n}{greedy\PYZus{}values} \PY{o}{=} \PY{n}{np}\PY{o}{.}\PY{n}{array}\PY{p}{(}\PY{p}{[}\PY{n}{prob}\PY{o}{.}\PY{n}{f}\PY{p}{(}\PY{n}{S}\PY{p}{[}\PY{p}{:}\PY{n}{t}\PY{p}{]}\PY{p}{)} \PY{k}{for} \PY{n}{t} \PY{o+ow}{in} \PY{n+nb}{range}\PY{p}{(}\PY{n}{k}\PY{o}{+}\PY{l+m+mi}{1}\PY{p}{)}\PY{p}{]}\PY{p}{)}

\PY{n+nb}{print}\PY{p}{(}\PY{l+m+mi}{32}\PY{o}{*}\PY{l+s+s2}{\PYZdq{}}\PY{l+s+s2}{\PYZhy{}}\PY{l+s+s2}{\PYZdq{}}\PY{p}{)}
\PY{n+nb}{print}\PY{p}{(}\PY{l+s+s2}{\PYZdq{}}\PY{l+s+s2}{Bian Analysis at k=1}\PY{l+s+s2}{\PYZdq{}}\PY{p}{)}
\PY{n+nb}{print}\PY{p}{(}\PY{n}{Bian\PYZus{}gamma\PYZus{}alpha}\PY{p}{(}\PY{n}{prob}\PY{o}{.}\PY{n}{vectors}\PY{p}{)}\PY{p}{)}
\PY{n+nb}{print}\PY{p}{(}\PY{l+s+s2}{\PYZdq{}}\PY{l+s+s2}{Bian Reduction Approximation Ratio: }\PY{l+s+si}{\PYZob{}\PYZcb{}}\PY{l+s+s2}{\PYZdq{}}\PY{o}{.}\PY{n}{format}\PY{p}{(}\PY{n+nb}{round}\PY{p}{(}\PY{n}{Bian\PYZus{}reduction\PYZus{}approx\PYZus{}ratio}\PY{p}{(}\PY{o}{*}\PY{o}{*}\PY{n}{Bian\PYZus{}gamma\PYZus{}alpha}\PY{p}{(}\PY{n}{prob}\PY{o}{.}\PY{n}{vectors}\PY{p}{)}\PY{p}{)}\PY{p}{,}\PY{l+m+mi}{6}\PY{p}{)}\PY{p}{)}\PY{p}{)}
\PY{n+nb}{print}\PY{p}{(}\PY{l+s+s2}{\PYZdq{}}\PY{l+s+s2}{Bian Optimal Lower Bound: }\PY{l+s+si}{\PYZob{}\PYZcb{}}\PY{l+s+s2}{\PYZdq{}}\PY{o}{.}\PY{n}{format}\PY{p}{(}\PY{n+nb}{round}\PY{p}{(}\PY{n}{reduction\PYZus{}approx\PYZus{}ratio\PYZus{}lb}\PY{p}{(}\PY{n}{greedy\PYZus{}values}\PY{p}{[}\PY{l+m+mi}{1}\PY{p}{]}\PY{p}{,}\PY{n}{f\PYZus{}0}\PY{p}{,}\PY{n}{Bian\PYZus{}reduction\PYZus{}approx\PYZus{}ratio}\PY{p}{(}\PY{o}{*}\PY{o}{*}\PY{n}{Bian\PYZus{}gamma\PYZus{}alpha}\PY{p}{(}\PY{n}{prob}\PY{o}{.}\PY{n}{vectors}\PY{p}{)}\PY{p}{)}\PY{p}{)}\PY{p}{,}\PY{l+m+mi}{6}\PY{p}{)}\PY{p}{)}\PY{p}{)}
\PY{n+nb}{print}\PY{p}{(}\PY{l+m+mi}{32}\PY{o}{*}\PY{l+s+s2}{\PYZdq{}}\PY{l+s+s2}{\PYZhy{}}\PY{l+s+s2}{\PYZdq{}}\PY{p}{)}

\PY{n}{our\PYZus{}lb\PYZus{}values} \PY{o}{=} \PY{n}{greedy\PYZus{}values}\PY{o}{/}\PY{n}{our\PYZus{}approx\PYZus{}ratio}
\PY{n}{Chamon\PYZus{}lb\PYZus{}values} \PY{o}{=} \PY{n}{np}\PY{o}{.}\PY{n}{array}\PY{p}{(}\PY{p}{[}\PY{n}{reduction\PYZus{}approx\PYZus{}ratio\PYZus{}lb}\PY{p}{(}\PY{n}{greedy\PYZus{}values}\PY{p}{[}\PY{n}{t}\PY{p}{]}\PY{p}{,}\PY{n}{f\PYZus{}0}\PY{p}{,}\PY{n}{Chamon\PYZus{}reduction\PYZus{}approx\PYZus{}ratio}\PY{p}{(}\PY{n}{t}\PY{p}{)}\PY{p}{)} \PY{k}{for} \PY{n}{t} \PY{o+ow}{in} \PY{n+nb}{range}\PY{p}{(}\PY{n}{k}\PY{o}{+}\PY{l+m+mi}{1}\PY{p}{)}\PY{p}{]}\PY{p}{)}

\PY{n}{plt}\PY{o}{.}\PY{n}{subplots}\PY{p}{(}\PY{n}{figsize}\PY{o}{=}\PY{p}{(}\PY{l+m+mi}{8}\PY{p}{,} \PY{l+m+mi}{6}\PY{p}{)}\PY{p}{)}
\PY{n}{plt}\PY{o}{.}\PY{n}{plot}\PY{p}{(}\PY{n+nb}{range}\PY{p}{(}\PY{n}{k}\PY{o}{+}\PY{l+m+mi}{1}\PY{p}{)}\PY{p}{,}\PY{n}{greedy\PYZus{}values}\PY{p}{,} \PY{n}{color}\PY{o}{=}\PY{l+s+s1}{\PYZsq{}}\PY{l+s+s1}{blue}\PY{l+s+s1}{\PYZsq{}}\PY{p}{,} \PY{n}{linewidth}\PY{o}{=}\PY{l+m+mi}{2}\PY{p}{,} \PY{n}{label}\PY{o}{=}\PY{l+s+s1}{\PYZsq{}}\PY{l+s+s1}{Greedy Algorithm}\PY{l+s+s1}{\PYZsq{}}\PY{p}{)}
\PY{n}{plt}\PY{o}{.}\PY{n}{plot}\PY{p}{(}\PY{n+nb}{range}\PY{p}{(}\PY{n}{k}\PY{o}{+}\PY{l+m+mi}{1}\PY{p}{)}\PY{p}{,}\PY{n}{our\PYZus{}lb\PYZus{}values}\PY{p}{,} \PY{n}{color}\PY{o}{=}\PY{l+s+s1}{\PYZsq{}}\PY{l+s+s1}{green}\PY{l+s+s1}{\PYZsq{}}\PY{p}{,} \PY{n}{linewidth}\PY{o}{=}\PY{l+m+mi}{2}\PY{p}{,} \PY{n}{label}\PY{o}{=}\PY{l+s+s1}{\PYZsq{}}\PY{l+s+s1}{Our Lower Bound on Optimal}\PY{l+s+s1}{\PYZsq{}}\PY{p}{,} \PY{n}{linestyle}\PY{o}{=}\PY{l+s+s1}{\PYZsq{}}\PY{l+s+s1}{dashed}\PY{l+s+s1}{\PYZsq{}}\PY{p}{)}
\PY{n}{plt}\PY{o}{.}\PY{n}{plot}\PY{p}{(}\PY{n+nb}{range}\PY{p}{(}\PY{n}{k}\PY{o}{+}\PY{l+m+mi}{1}\PY{p}{)}\PY{p}{,}\PY{n}{Chamon\PYZus{}lb\PYZus{}values}\PY{p}{,} \PY{n}{color}\PY{o}{=}\PY{l+s+s1}{\PYZsq{}}\PY{l+s+s1}{red}\PY{l+s+s1}{\PYZsq{}}\PY{p}{,} \PY{n}{linewidth}\PY{o}{=}\PY{l+m+mi}{2}\PY{p}{,} \PY{n}{label}\PY{o}{=}\PY{l+s+s1}{\PYZsq{}}\PY{l+s+s1}{Chamon}\PY{l+s+se}{\PYZbs{}\PYZsq{}}\PY{l+s+s1}{s Lower Bound on Optimal}\PY{l+s+s1}{\PYZsq{}}\PY{p}{,} \PY{n}{linestyle}\PY{o}{=}\PY{l+s+s1}{\PYZsq{}}\PY{l+s+s1}{dotted}\PY{l+s+s1}{\PYZsq{}}\PY{p}{)}

\PY{n}{plt}\PY{o}{.}\PY{n}{xlabel}\PY{p}{(}\PY{l+s+s1}{\PYZsq{}}\PY{l+s+s1}{k}\PY{l+s+s1}{\PYZsq{}}\PY{p}{,} \PY{n}{fontsize}\PY{o}{=}\PY{l+m+mi}{13}\PY{p}{)}
\PY{n}{plt}\PY{o}{.}\PY{n}{ylabel}\PY{p}{(}\PY{l+s+s1}{\PYZsq{}}\PY{l+s+s1}{f value}\PY{l+s+s1}{\PYZsq{}}\PY{p}{,} \PY{n}{fontsize}\PY{o}{=}\PY{l+m+mi}{13}\PY{p}{)}
\PY{n}{plt}\PY{o}{.}\PY{n}{xlim}\PY{p}{(}\PY{l+m+mi}{0}\PY{p}{,} \PY{n}{k}\PY{p}{)}
\PY{n}{plt}\PY{o}{.}\PY{n}{ylim}\PY{p}{(}\PY{l+m+mi}{0}\PY{p}{,} \PY{n}{d}\PY{p}{)}
\PY{n}{plt}\PY{o}{.}\PY{n}{grid}\PY{p}{(}\PY{k+kc}{True}\PY{p}{,} \PY{n}{linestyle}\PY{o}{=}\PY{l+s+s1}{\PYZsq{}}\PY{l+s+s1}{\PYZhy{}\PYZhy{}}\PY{l+s+s1}{\PYZsq{}}\PY{p}{,} \PY{n}{alpha}\PY{o}{=}\PY{l+m+mf}{0.7}\PY{p}{)}
\PY{n}{plt}\PY{o}{.}\PY{n}{legend}\PY{p}{(}\PY{p}{)}
\PY{n}{plt}\PY{o}{.}\PY{n}{tight\PYZus{}layout}\PY{p}{(}\PY{p}{)}
\PY{n}{plt}\PY{o}{.}\PY{n}{show}\PY{p}{(}\PY{p}{)}
\end{Verbatim}
\end{tcolorbox}

    \begin{Verbatim}[commandchars=\\\{\}]
--------------------------------
Bian Analysis at k=1
\{'gamma': 0.0002594098407928644, 'alpha': 0.9997405901592071\}
Bian Reduction Approximation Ratio: 0.000259
Bian Optimal Lower Bound: -1907.699383
--------------------------------
    \end{Verbatim}

    \begin{center}
    \adjustimage{max size={0.9\linewidth}{0.9\paperheight}}{Numerical_Simulations/output_4_1.png}
    \end{center}
    { \hspace*{\fill} \\}

%%%%%%%%%%%%%%%%%%%%%%%%%%%%%%%%%%%%%%%%%%%%%%%%%%%%%%%%%%%%

% \newpage
% \input{checklist.tex}

\end{document}